\definecolor{MyGreen1}{RGB}{20,180,40}
\tikzset{
  block/.style    = {draw, thick, rectangle, minimum width = 2em},
sblock/.style      = {draw, thick, rectangle, minimum height = 2em,
minimum width = 2em}, 
}
\newcommand{\Expect}{\mathbb{E}}
\renewcommand{\tilde}{\widetilde}
\newcommand{\calX}{\mathcal{X}}
\newcommand{\gptwo}{\textsc{GPT-2}\xspace}
\newcommand{\binary}{\{0,1\}}
\newcommand{\softmax}{\mathrm{softmax}}
\newcommand{\softplus}{\mathrm{softplus}}
\newcommand{\sigmoid}{\mathrm{sigmoid}}
\newcommand{\conv}{\mathrm{conv}}
\newcommand{\mamba}{\mathsf{Mamba}}
\newcommand{\simplemamba}{\mathsf{MambaZero}}
\newcommand{\thetamamba}{\btheta_{\mathsf{Mamba}}}
\newcommand{\adbeta}{add-$\beta$~}
\newcommand{\dir}[1]{\mathrm{Dir}(#1 \cdot \bm{1})}
\newcommand{\betaprob}[3]{\mathbb{P}_{#1}^{(#2)}\left(#3\right)}
\newcommand{\round}{\mathrm{round}}
\newcommand{\thetaprob}[1]{\mathbb{P}_{\bm{\theta}}\left(#1\right)}
\newcommand{\KL}[2]{D_{\rm KL}\left(#1 \| #2\right)}
\newcommand{\set}[1]{[#1]}
\newcommand{\relu}{\mathrm{ReLU}}
\newcommand{\logit}{\mathrm{logit}}
\newcommand{\inrd}{\in \mathbb{R}^d}
\newcommand{\inr}[1]{\in \mathbb{R}^{#1}}
\newcommand{\inreals}[1]{\in \mathbb{R}}
\newcommand{\vocab}{\calX}
\newcommand{\kth}{$k^{\text{th}}$-order~}
\newcommand{\predprob}{f_{\btheta}(x_1^t)}
\DeclarePairedDelimiterX{\infdivx}[2]{(}{)}{%
  #1\;\delimsize\|\;#2%
}
\newtheorem{theorem}{Theorem}
\newtheorem{lemma}{Lemma}
\newtheorem*{datagen*}{Data generation}
\newcommand{\reals}{\mathbb{R}}
\newcommand{\naturals}{\mathbb{N}}
\newcommand{\prob}[1]{\mathbb{P}\left(#1\right)}
\newcommand{\inner}[2]{\langle {#1}, {#2}  \rangle}
\newcommand{\vect}[1]{\boldsymbol{#1}}
\newcommand{\bb}{\vect{b}}
\newcommand{\bc}{\vect{c}}
\newcommand{\be}{\vect{e}}
\newcommand{\bo}{\vect{o}}
\newcommand{\bu}{\vect{u}}
\newcommand{\bv}{\vect{v}}
\newcommand{\bw}{\vect{w}}
\newcommand{\bx}{\vect{x}}
\newcommand{\by}{\vect{y}}
\newcommand{\bz}{\vect{z}}
\newcommand{\btheta}{\vect{\theta}}
\newcommand{\bbeta}{\vect{\beta}}
\newcommand{\unif}[1]{\mathrm{Unif}(#1)}
\newcommand{\define}{\triangleq}
\newcommand{\pth}[1]{\left( #1 \right)}
\newcommand{\qth}[1]{\left[ #1 \right]}
\newcommand{\eg}{e.g.\xspace}
\newcommand{\ie}{i.e.\xspace}
\newcommand{\iid}{i.i.d.\xspace}
\mathchardef\mhyphen="2D
\definecolor{MyGreen1}{RGB}{20,180,40}
\title{From Markov to Laplace: How Mamba\\ In-Context Learns Markov Chains}
\author{%
    Marco Bondaschi \thanks{Correspondence to \texttt{marco.bondaschi@epfl.ch}.} \\
    EPFL \\
    \And
    Nived Rajaraman \\
    UC Berkeley \\
    \And 
    Xiuying Wei \\
    EPFL \\
    \And
    Kannan Ramchandran\\
    UC Berkeley \\
    \And
    Razvan Pascanu\\
    Google DeepMind \\
    \And
    Caglar Gulcehre \\
    EPFL \\
    \And
    Michael Gastpar \\
    EPFL \\
    \And
    Ashok Vardhan Makkuva \\
    Télécom Paris
}
\begin{document}

\maketitle

\begin{abstract}
While transformer-based language models have driven the AI revolution thus far, their computational complexity has spurred growing interest in viable alternatives, such as structured state space sequence models (SSMs) and Selective SSMs. Among these, Mamba (S$6$) and its variant Mamba-$2$ have shown remarkable inference speed-ups over transformers while achieving comparable or superior performance on complex language modeling tasks. However, despite these architectural innovations and empirical successes, the fundamental learning capabilities of Mamba remain poorly understood. In this paper, we address this gap by studying in-context learning (ICL) on Markov chains and uncovering an interesting phenomenon: even a single-layer Mamba efficiently learns the in-context Laplacian smoothing estimator, which is both Bayes and minimax optimal. To explain this, we theoretically characterize the representation capacity of Mamba and reveal the fundamental role of convolution in enabling it to represent the optimal Laplacian smoothing. These theoretical insights align strongly with empirical results and, to the best of our knowledge, represent the first formal connection between Mamba and optimal statistical estimators. Finally, we outline promising research directions inspired by these findings. Code is available at \url{https://github.com/Bond1995/Markov-Mamba}.
\end{abstract}


\section{Introduction}
\label{sec:intro}
Transformers have been at the forefront of recent breakthroughs in language modeling, driving the AI revolution \citep{vaswani2017attention, Radford2018ImprovingLU, devlin18}. Despite their empirical success, transformers suffer from high computational complexity, such as quadratic scaling in sequence length during training and linear cache size at inference \citep{gu2023mamba}. To address these limitations, there is a growing interest in designing alternative efficient architectures among which structured state space models (SSMs) are the most prominent. In particular, Selective SSMs such as Mamba and Mamba-$2$, have achieved state-of-the-art results in various language modeling tasks, while greatly improving the inference throughput \citep{cirone2025theoreticalfoundationsdeepselective}.

Motivated by this success, there is tremendous interest in understanding the sequential modeling abilities of SSMs, especially that of Mamba. In particular, mirroring a theme that has been successful in unraveling fundamental mechanisms (\eg induction heads) behind transformers \citep{makkuva2024attention, makkuva2024local, rajaraman2024transformersmarkovdataconstant, nic2024trans, edelman2024evolution}, a growing body of research explores Mamba through its in-context learning (ICL) capabilities \citep{grazzi2024mamba,halloran2024mamba,akyurek2024context,park2024can}. While these works reveal interesting insights about Mamba's ICL abilities vis-a-vis transformers, they are largely empirical in nature, and we currently lack a fundamental theoretical understanding of Mamba and its underlying learning mechanisms. We are thus motivated to ask:
\begin{quote}
\centering
{\bf Can we systematically characterize the ICL capabilities of Mamba?}
\end{quote}

In this paper, we approach this question from the point of view of representation power, and characterize Mamba's ICL capabilities 
on Markov processes, building upon the Markov-ICL framework originally introduced for transformers \citep{edelman2024evolution}. As opposed to a Mamba vs. Transformers comparison, here we leverage this framework for a detailed study of Mamba, and uncover an interesting phenomenon: even a single-layer Mamba efficiently learns the in-context Laplacian smoothing estimator, which is both
Bayes and minimax optimal, for all Markov orders (Figs.~\ref{fig:estimators} and \ref{fig:deeper-mamba}). Towards explaining this, we theoretically characterize the representation capacity of Mamba and demonstrate that the convolution mechanism, together with selectivity and recurrence, plays a fundamental role in realizing the Laplacian smoothing. Importantly, we showcase that these theoretical insights align strongly with empirical results, even outside the realm of Markovian data. To the best of our knowledge, this is the first result of its kind connecting Mamba and optimal statistical estimators. \looseness=-1

\begin{figure*}[t]
\captionsetup[sub]{}
\centering
\begin{subfigure}{0.49\textwidth}
\centering
\includegraphics[width=\textwidth]{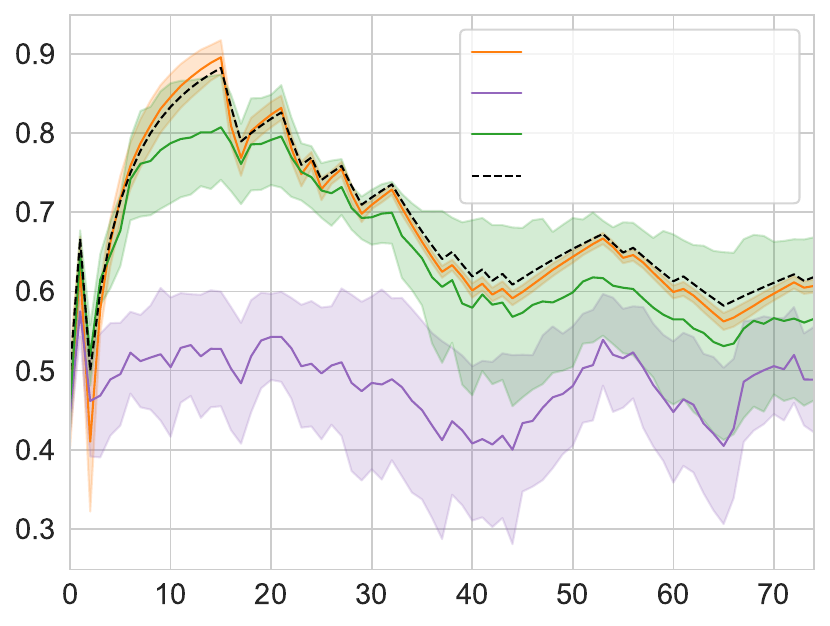} 
    \put(-110,-7){\fontsize{9}{3}\selectfont $t: x_t = 0$}
      \put(-205,73){\rotatebox[origin=t]{90}{\fontsize{7}{3}\selectfont Predicted probability $\mathbb{P}_{\btheta}\pth{{x_{t+1}=1 \mid x_1^t}}$}}
      \put(-68,132.5){\fontsize{7}{3}\selectfont $1$-layer Mamba}
      \put(-68,123){\fontsize{7}{3}\selectfont $1$-layer Transformer}
      \put(-68,113){\fontsize{7}{3}\selectfont $2$-layer Transformer}
      \put(-68,103.5){\fontsize{7}{3}\selectfont Optimal estimator}
\caption{Next-token probability estimation}
\label{fig:estimator-prob}
\end{subfigure}
\hfill
\begin{subfigure}{0.49\textwidth}
\centering
\includegraphics[width=\textwidth]{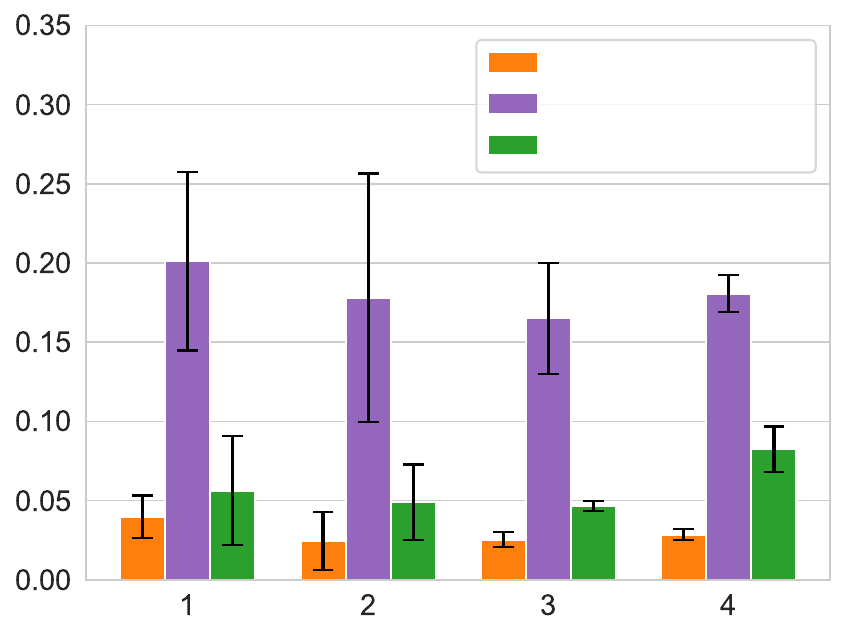} 
    \put(-112,-7){\fontsize{8}{3}\selectfont Markov order}
      \put(-201,73){\rotatebox[origin=t]{90}{\fontsize{7}{3}\selectfont $L_1$ distance}}
      \put(-68,130){\fontsize{7}{3}\selectfont $1$-layer Mamba}
      \put(-68,120){\fontsize{7}{3}\selectfont $1$-layer Transformer}
      \put(-68,110.5){\fontsize{7}{3}\selectfont $2$-layer Transformer}
\caption{$L_1$ distance of model estimation from optimal}
\label{fig:estimator-norm}
\end{subfigure}
\caption{Single-layer Mamba learns the optimal Laplacian estimator when trained on random Markov chains, exhibiting ICL. (a) shows the predicted probability distribution on a fixed test sequence for models trained on binary first-order Markov sources. (b) quantifies the $L_1$ deviation from the optimal estimator for random sequences and various Markov orders. The error intervals show the standard deviation across 5 runs. \prettyref{sec:lower_bound} and \prettyref{fig:deeper-networks} further discuss Mamba vs. Transformers.  \looseness=-1}
\label{fig:estimators}
\end{figure*}

In summary, we make the following contributions:

\begin{itemize}
   \item Leveraging the Markov-ICL framework, we uncover the surprising fact that even a single-layer Mamba learns the optimal in-context estimator for all Markov orders (\prettyref{fig:estimators}). Intriguingly, convolution plays a pivotal role, more so than gating and non-linear activation, in this learning ability (\prettyref{sec:empirical}).\looseness=-1
    \item Towards explaining this phenomenon, we characterize the representational capacity of single-layer Mamba and show, both theoretically and empirically, how it represents the optimal in-context estimator for any finite-state first-order processes, through an intricate interplay of convolution, selectivity and recurrence. Further, we provide fundamental limits for higher-order processes (\prettyref{sec:theory}). \looseness=-1
    \item We demonstrate the generality of our findings on non-Markovian data and illustrate the fundamental role of convolution even on complex language-modeling tasks (\prettyref{sec:beyond}).     
\end{itemize}

\subsection{Related Work}

SSMs~\citep{gu2020hippo, gu2021combining} have been recently introduced as an alternative recurrent architecture aimed at rivaling the well established transformer backbone~\citep{vaswani2017attention}. The model was originally introduced as a discretized linear dynamical system~\citep{gu2021combining}. Recent works tried to re-frame the architecture from a linear recurrent perspective \citep{orvieto2023resurrecting}.
However, there are still many gaps in understanding this family of models~\citep{team2024jamba}, such as questions around expressivity~\citep{orvieto2023universality}. This is particularly important given the proliferation of Mamba-inspired architectures that have emerged since its introduction \citep{qin2024mamba,csordas2024moeut,zhu2401vision,mamba2023gu,de2024griffin,Beck2024xLSTM}.

To this end, our work squarely focuses on understanding the representation power of Mamba, and in particular its ICL capability, which, while extensively studied for transformers \citep{xie2021explanation,hendel2023context, bai2023statisticians}, remains largely unexplored for SSMs. In this space, recent studies such as \cite{sushma2024state}, have shown that SSMs can perform gradient-based learning for in-context adaptation similar to transformers. There is conflicting evidence whether Mamba's ICL abilities are better \citep{grazzi2024mamba} or worse \citep{halloran2024mamba,akyurek2024context} compared to transformers. Nonetheless, SSMs have demonstrated promising results in in-context reinforcement learning tasks \citep{lu2024structured}, as well as in next-state prediction for dynamical models \citep{joseph2024hippo}, highlighting the potential of SSMs as efficient alternatives to transformers for ICL tasks. Motivated by this, as opposed to an architectural comparison \cite{jelassi2024repeat, bhattamishra2024separations, merrill2024illusion, sarrof2024expressive}, here we solely focus on Mamba's ICL capabilities, specifically, through the lens of random Markov processes. This framework has been successfully applied to transformers \citep{edelman2024evolution,makkuva2024attention,makkuva2024local,rajaraman2024transformersmarkovdataconstant,nic2024trans}, where it helped unveil fundamental learning mechanisms of transformers such as induction heads. Ours is the first work that employs this framework for Mamba and SSMs. \looseness=-1

\section{Problem setup}
\label{sec:background}
We formally define the problem setting and provide necessary background. We use the following notation: scalars are denoted by such italic lower case letters as $x,y$, Euclidean vectors by bold $\bx, \by$, and matrices by upper case $X, Y$, etc. $\bm{1}$ refers to the all-one vector. For $T \in \naturals$, $[T] \define \{1, \ldots, T \}$, and for a sequence $(x_t)_{t \geq 1}$, define $x_k^t \triangleq (x_k, \ldots, x_t)$. For $z \in \reals$,  $\sigmoid(z) \triangleq 1/(1+e^{-z}), \relu(z) \triangleq \max(0,z)$, and $\softplus(z) \define \log(1+e^z)$. $\mathrm{Unif} (S)$ denotes the uniform distribution over a set $S$ and $\mathrm{Dir}(\bbeta)$ denotes the Dirichlet distribution with parameter $\bbeta >0  $. $\KL{P}{Q}$ denotes the KL divergence between distributions $P$ and $Q$. \looseness=-1

\subsection{Input data: Random Markov chains}
\label{sec:markov_background}
To investigate the ICL capabilities of Mamba, we build upon the Markov-ICL framework of \cite{edelman2024evolution}. In particular, we let the input tokens to be stochastic and drawn from a random Markov chain of order $k$ . That is, the token sequence $x = (x_t)_{t=1}^T \in \calX^T$ on the state space (vocabulary) $\calX$ follows the transition dynamics: 
\begin{align}
\prob{x_{t+1} = \cdot \mid x_1^{t}} = \prob{x_{t+1} = \cdot \mid x_{t-k+1}^{t}}, 
\end{align}
almost surely for all $t \in [T]$, and the \kth Markov kernels, $ \prob{x_{t+1} = \cdot \mid x_{t-k+1}^{t}=i_{t-k+1}^{t}}$, are sampled independently for each tuple $(i_{t-k+1},\cdots,i_t)$ from the Dirichlet prior $\dir{\beta}$, with $\beta >0$. When $\beta=1$, this corresponds to the uniform distribution on the $S$-dimensional simplex $\Delta_1^S$, where size $S=|\calX|$. \looseness=-1

The transition matrix $P =(P_{i_1^k})_{i_1^k \in \calX^k}, P_{i_1^k} \in [0,1]^S$, encapsulates the set of all $S^k$ conditional probabilities of the chain, each row corresponding to one of them. While this transition matrix governs the generation of each token $x_t$ for $t >k$, the first $k$-tokens $x_1,\ldots,x_k$ are drawn \iid from $\unif{\calX}$. This constitutes the joint law of the random variables $(P,x)$, termed random Markov distribution henceforth. More succinctly,

\noindent {\bf Data generation} (Random Markov sequences).
\vspace{-0.5em}
    \begin{enumerate}
        \item Draw $P$ with each row sampled \iid from $\dir{\beta}$. \vspace{-0.5em}
        \item For $t=1,\ldots,k$, sample $x_t \sim \unif{\vocab}$.
        \vspace{-0.5em}
        \item For $t=k, \ldots, T$, sample $x_{t+1} \sim P_{x_{t-k+1}^t} $.
        \vspace{-0.5em}
        \item Return the input $x=(x_t)_{t=1}^T$.
              \vspace{-0.5em}
        \item Repeat the above steps to generate a batch $\{x^{(b)}\}_{b \in [B]}$.
    \end{enumerate}

\noindent {\bf Why Random Markov is a good testbed for ICL.} As a consequence of the generation process, every sequence follows a different Markov distribution. Therefore, at inference, a model trained on this random Markovian data has to estimate the next-token distribution in-context for every test sequence. Hence, this data class serves as a good sandbox to gauge the ICL capabilities of Mamba, which was also used in a similar context for transformers \citep{nic2024trans, rajaraman2024transformersmarkovdataconstant}. \looseness=-1



\subsection{Mamba architecture}
\label{sec:mamba_background}

Selective SSMs such as Mamba and Mamba-2 are a class of sequence-to-sequence models that are closely related to RNNs and classical state space models \citep{mamba2023gu}. 
\begin{wrapfigure}{r}{0.48\textwidth}
  \begin{center}
    \scalebox{0.77}{\definecolor{embed}{HTML}{AEF78E}
\definecolor{attn}{HTML}{f9e3ae}
\definecolor{ff}{HTML}{90F3FF}
\definecolor{lin}{HTML}{E15CEC}
\definecolor{mask1}{HTML}{4D4847}
\definecolor{mask2}{HTML}{30343F}

\begin{tikzpicture}[>={Latex[length=1mm,width=1mm]}, node distance=1.5cm]
        \node (x1) {$x_1$};
        \node[right=0.5cm of x1]    (dotsl_x) {$\dots$};
        \node[right=0.5cm of dotsl_x] (xt)    {$x_t$};
        \node[right=0.5cm of xt]    (dotsr_x) {$\dots$};
        \node[right=0.5cm of dotsr_x] (xT)    {$x_T$};
        \node[right=-0.15cm of xT]       (set)  {$\in \{0,1\}$};
        \foreach \i in {1,t,T}{
            \node[above=0.3cm of x\i, minimum height = 0.6cm] (embed\i) [draw, rectangle, fill=embed!70] {Embedding};
        }
        \node[above=1.6cm of x1] (x1-corner) {};
        \node[above=1.6cm of xt] (xt-corner) {};
        \node[above=1.6cm of xT] (xT-corner) {};
        \foreach \i in {1,t,T}{
            \node[above=2cm of x\i, minimum height = 0.8cm] (attn\i) [draw, rectangle, fill=attn!50] {Mamba};
        }
        \draw[->] (attn1.east) -- (attnt.west); 
        \draw[->] (attnt.east) -- (attnT.west); 
        \node[above=1.6cm of xt, minimum width = 7cm, minimum height = 3.2cm] (xfmr) [draw, rectangle] {};
        \foreach \i in {1,t,T}{
            \draw[->] (x\i) -- (x\i |- embed\i.south);
            \draw[->] (embed\i.north) -- (attn\i.south -| embed\i.north) node[pos=0.3,fill=white] (bx\i) {$\bx_{\i}$}; 
        }
        \path (bx1) -- (bxt) node[midway] (midpoint) {$\dots$};
        \path (bxt) -- (bxT) node[midway] (midpoint) {$\dots$};
        \foreach \i in {1,t,T}{
            \node[above=3.6cm of x\i, minimum height = 0.6cm] (FF\i) [draw, rectangle, fill=ff] {MLP};
            \draw[->] (attn\i.north -| FF\i.south) -- (FF\i.south) node[pos=0.45,fill=white] (by\i) {$\bu_{\i}$}; 
        }
        \path (by1) -- (byt) node[midway] (midpoint) {$\dots$};
        \path (byt) -- (byT) node[midway] (midpoint) {$\dots$};
        \foreach \i in {1,t,T}{
            \node[above=0.85cm of FF\i, minimum height = 0.6cm] (lin\i) [draw, rectangle, fill=lin!60] {Linear};
            \draw[->] (FF\i.north) -- (lin\i.south) node[pos=0.35,fill=white] (bz\i) {$\bv_{\i}$}; 
        }
        \path (bz1) -- (bzt) node[midway] (midpoint) {$\dots$};
        \path (bzt) -- (bzT) node[midway] (midpoint) {$\dots$};
        \foreach \i in {1,t,T}{
            \node[above=0.83cm of lin\i, minimum height = 0.6cm] (soft\i) [draw, rectangle] {$\sigma(\cdot)$};
            \draw[->] (lin\i.north) -- (soft\i.south) node[pos=0.43,fill=white] (logit\i) {$\logit_{\i}$}; 
        }
        \path (logit1) -- (logitt) node[midway] (midpoint) {$\dots$};
        \path (logitt) -- (logitT) node[midway] (midpoint) {$\dots$};
        \foreach \i in {1,t,T}{
            \node[above=0.25cm of soft\i, minimum height = 0.6cm] (prob\i) {$f_{\btheta}(x_{1}^{\i})$};
            \draw[->] (soft\i.north) -- (prob\i);
        }
        \path (prob1) -- (probt) node[midway] (midpoint) {$\dots$};
        \path (probt) -- (probT) node[midway] (midpoint) {$\dots$};
    \end{tikzpicture}}
  \end{center}
  \caption{Mamba-based language model.}
\end{wrapfigure}
A key feature underpinning these models is the \emph{selectivity mechanism}, enabling them to selectively choose inputs at every timestep, as opposed to linear time-invariant (LTI) systems. While we believe our work captures the behavior of all selective SSMs, we will specifically focus on the state-of-the-art Mamba-2 model to simplify exposition. By slight abuse of terminology, henceforth we will also refer to this model simply as Mamba. Mathematically speaking, Mamba implements the sequence-to-sequence mapping $\mamba: \reals^{d \times T} \mapsto \reals^{d \times T} $, where given a sequence of input embeddings $\bx = (\bx_t)_{t=1}^T \inr{d\times T}$ of dimension $d$, it outputs the corresponding output embeddings $\bo = (\bo_t)_{t=1}^T \inr{d\times T}$ of the same dimension with $\bo = \mamba(\bx) $. More precisely, fix $t \in \set T$. Then the output $\bo_t$ at time $t$ is computed as $ \bo_t = \mamba(\bx_1^t)$ using the following recurrence equations \citep{dao2024transformers}:
\noindent\begin{minipage}{0.4\textwidth}
    \begin{equation*}
    \begin{split}
    H_t &= a_t \, H_{t-1} + \tilde{\bx}_t \, \bb_t^\top \inr{ed \times N}, \\
    \by_t &= H_t\, \bc_t \inr{ed}, \\
    \bz_t &= \by_t \odot \relu(W_z \, \bx_t) \inr{ed}, \\
    \bo_t &= W_o\, \bz_t \inr{d},
    \end{split}
    \end{equation*}
    \begin{equation}
    \tag{Mamba}\label{eq:mamba_block}
    \end{equation}
\end{minipage}%
\begin{minipage}{0.6\textwidth}
    \begin{equation*}
    \begin{split}
    a_t &\define \exp(-a \cdot \Delta_t) \in (0,1), \\
    \Delta_t &\define \softplus(\inner{\bw_{\Delta}}{\bx_t}+ \delta) \inr{}, \\
    \tilde{\bx}_t  &\define \relu(\conv_X(W_X \, \bx_{t-w+1}^t)) \cdot \Delta_t, \\
    \bb_t &\define \relu(\conv_B(W_B \, \bx_{t-w+1}^t)), \\
    \bc_t &\define \relu(\conv_C(W_C \, \bx_{t-w+1}^t)),
    \end{split}
    \end{equation*}
    \begin{equation}
    \tag{Input selectivity}\label{eq:mamba-params}
    \end{equation}
\end{minipage}

where the initial state $H_0 = 0$, $W_z \inr{ed \times d}, W_o \inr{d\times ed}$, $a \geq  0, \bw_{\Delta}\inr{d}, \delta\inr{}, W_X\inr{ed \times d}, W_B\inr{N \times d}$ and $W_C\inr{N\times d}$ are all learnable parameters, and $\conv(\bz_{t-w+1}^t)$ is a time-wise convolution of window $w \in \naturals$ with distinct kernels per dimension. Here $e\in\mathbb{N}$ is the feature expansion factor, typically $2$. Let $\thetamamba$ denote the set of all these parameters. \looseness=-1

\noindent {\bf Intuition behind Mamba.} The underlying intuition behind the update equations in \ref{eq:mamba_block} is simple: given a sequence of input embeddings $(\bx_t)$, we first capture their local temporal information using separate convolutions to compute $\tilde{\bx}_t, \bb_t$, and $\bc_t$ (\ref{eq:mamba-params}). Equipped with this local memory, we perform a linear state update to compute the current state $H_t$ from the past $H_{t-1}$, weighed by an input-dependent decay factor $a_t \in (0,1)$, and $(\tilde{\bx}_t, \bb_t)$. Subsequently, we compute the state projection $\by_t$, modulate it with an input-selective term to yield $\bz_t$, and finally project it down to get the output embedding $\bo_t$, which is a function of the entire input sequence until then, $\bx_1^t$, i.e., $ \bo_t = \mamba(\bx_1^t)$. \looseness=-1


\noindent  {\bf Mamba-based language model.}  \ref{eq:mamba_block} block is then incorporated into a full-fledged language model as follows: 
\begin{align}
\begin{split}
  x_t \in \binary \xrightarrow{\text{\ref{eq:embedding}}} \bx_t  \xrightarrow{\text{\ref{eq:mamba}}} \bu_t \xrightarrow{\text{\ref{eq:mlp}}} \bv_t \xrightarrow{\text{\ref{eq:logit}}} \logit
_t \xrightarrow{\text{\ref{eq:prediction}}} f_{\btheta}(x_1^t),
\end{split}
\label{eq:llm}
\end{align}
where $f_{\btheta}(x_1^t) \define \mathbb{P}_{\btheta}\pth{x_{t+1}= \cdot \mid x_1^t} = \softmax(\logit_t) \in [0,1]^S$ is the probability estimation for the next symbol $x_{t+1}$ conditioned on the past $x_1^t$. We omit the layer norm here for simplicity. We compactly denote the set of all model parameters as $\btheta\inr{D}$. We refer to \S~\ref{app:full-llm} for more details. \looseness=-1

\subsection{Learning task: next-token prediction}
With the objective of auto-regressively estimating the next token,  we train the model parameters $\btheta$ to minimize the cross-entropy loss between the next-token predicted probability $\predprob$ and the corresponding ground-truth symbol $x_{t+1}$ across all the positions $t \in \set T$:
\begin{equation}
L(\btheta) \define -\frac{1}{T} \sum_{t \in \set T} \Expect_P \Expect_{x_1^{t+1} \sim P}\big[\log f_{\btheta}^{(x_{t+1})} (x_1^t)\big],
\label{eq:loss}
\end{equation}
where $f_{\btheta}^{(j)} (x_1^t) \define \mathbb{P}_{\btheta}\pth{x_{t+1}=j \mid x_1^t}$ for $j\in\mathcal{X}$, and the expectation is both over the transition kernels $P$ and the Markov sequences $x= (x_t)_{t=1}^T$ sampled from $P$. In practice, it is replaced
by empirical average across a finite set of batches, sampled according to the random Markov distribution in \prettyref{sec:markov_background}. For our experiments we use the AdamW optimizer \citep{Kingma2017}.

\subsection{Optimal estimator: Laplacian smoothing}
\label{sec:laplace}
Given the Bayesian prediction loss in \prettyref{eq:loss}, it is natural to ask: \emph{what is the optimal $\btheta$ minimizing it?} It follows from a classical result in statistics (\cite{rissanen1984}, \S~\ref{app:laplace}) that this minimum is achieved when the corresponding model prediction matches the (average) ground-truth predictive distribution, \ie $\mathbb{P}_{\btheta}\pth{x_{t+1}= j \mid x_1^t} = \Expect_{P|x_1^t} \qth{ \prob{x_{t+1} =j \mid x_1^t}} $, for all $t$. Given the joint distribution of the pair $(P, x_1^{t+1})$ in \prettyref{sec:markov_background}, where the kernel $P \sim \dir{\beta}$, it can be shown (\S~\ref{app:laplace}) that the conditional expectation above simplifies to the well-known \emph{Laplacian smoothing}, also known as the \emph{add-$\beta$ estimator} (see e.g. \cite{merhav1998}):
\begin{align}
\mathbb{P}_\beta^{(k)}\pth{x_{t+1} = j \mid x_1^{t}}   \define \Expect_{P|x_1^t} \qth{ \prob{x_{t+1} =j \mid x_1^t}}  =  \frac{n_j + \beta}{n + 2 \beta},   
\tag{Laplacian smoothing}
\label{eq:laplace_smooth}
\end{align}
where $n_j$ is the number of times token $j$ follows the current \kth context $x_{t-k+1}^t$ in the sequence $x_1^t$, \ie $n_j = | \{ i: (x_{i-k}^{i-1}, x_i) = (x_{t-k+1}^t, j) \}  |  $ and $n$ is the frequency of this context, \ie $ n = | \{ i: x_{i-k}^{i-1} = x_{t-k+1}^t \}  | $. Adjusting these counts by $\beta$ plays the role of additive smoothing, which avoids assigning zero probabilities to unseen events, an idea dating back to Laplace \citep{laplace1814essaiprob}. It is also known that the add-$\beta$ estimator is asymptotically minimax optimal, as $T\to\infty$ \citep{xie-barron-1997, orlitsky2018mc}. 

\noindent {\bf How Laplacian smoothing implies ICL.} If Mamba realizes this smoothing estimator, \ie $\mathbb{P}_{\btheta} = \mathbb{P}_{\beta}^{(k)}$, it automatically implies its ICL abilities: given a fresh test sequence at inference, in order to optimally predict the next token, it has to process the input tokens in-context to compute the relevant counts, as in the \ref{eq:laplace_smooth}. \emph{But does Mamba realize this optimal counting estimator in practice?}

\section{Does Mamba learn in-context estimators?}
\label{sec:empirical}

To investigate the ICL capabilities of Mamba, we consider the problem setup described above and train Mamba and transformer models using AdamW on the next-token prediction loss in \prettyref{eq:loss} on random Markov chains (we refer to \S~\ref{app:architecture} for more experimental details). These experiments reveal interesting and rather surprising insights about Mamba:

\begin{enumerate}
    \item Mamba learns the optimal Laplacian smoothing estimator on the Markov prediction task, even with a single layer (\prettyref{fig:estimator-prob}). 
    \item Convolution mechanism plays a fundamental role in Mamba, more so than gating and non-linear activations, in aiding its learning abilities (\prettyref{fig:conv-no-conv}).
\end{enumerate}

In the sequel, we expand upon these observations in detail. 

\noindent {\bf 1) Mamba learns the Laplacian smoothing.} After training, we evaluate Mamba and transformers on the same test sequence fixed beforehand and compare their performance to that of the optimal \ref{eq:laplace_smooth} estimator. Specifically, we compare their next-token prediction probabilities with those of the \adbeta estimator. \prettyref{fig:estimators} illustrates these results for various Markov orders, which uncovers a surprising phenomenon: \emph{even a single-layer Mamba sharply matches the optimal estimator on the whole sequence}. The same conclusion holds for larger state spaces and deeper models (\prettyref{fig:deeper-mamba}), as well as in over-parametrized settings (\prettyref{fig:over-param}), and even when part of the dataset is held out (see \S~\ref{app:additional}). For transformers, we observe that a two-layer model also matches the predictor, albeit less sharply, whereas a single layer fails to solve the task. This aligns with recent theoretical results \citep{sanford2024onelayer, ekbote2025cannotcantwolayertransformers}, that show that two layers are required for transformers to implement an induction head (realizing the counting estimator) efficiently. We also observe that linear attention performs similarly to softmax attention in this setting (cf. \prettyref{fig:lin-att}). \prettyref{sec:lower_bound} further discusses Mamba vs. Transformers. \looseness=-1

\noindent {\bf 2) Convolution is the key.} To decipher the key architectural component behind Mamba's success in Markov prediction task, we do an ablation study on its three main features: (i) convolution in \ref{eq:mamba-params}, (ii) ReLU non-linearity in \ref{eq:mamba-params}, and (iii) the gating mechanism in \ref{eq:mamba_block} and \ref{eq:mlp}. Amongst them, interestingly, \emph{convolution} plays a fundamental role in the model's performance, as illustrated in \prettyref{fig:conv-no-conv}. Here we compare the full Mamba architecture from \prettyref{sec:mamba_background}, Mamba with just the convolution in \ref{eq:mamba-params} removed, and a simplified Mamba architecture with only convolution ($\simplemamba$ in \prettyref{sec:mambazero}). Further experiments on adding/removing convolution to Mamba and transformers, as well as experiments with varying width, are shown in \S~\ref{sec:convolution}, while experiments on natural language are deferred to \prettyref{sec:english}. As a metric of comparison, we use the closeness of each of these models' losses $L(\btheta)$ to that of the optimal add-$\beta$ estimator $L_{\beta}$, \ie \ie $|L(\btheta) - L_{\beta}|$. The closer this metric is to zero, the better the model's performance is. Remarkably, the simplified Mamba with just the convolution succeeds on the Markov prediction task, while the full model without convolution fails, highlighting its fundamental importance. This raises a natural question: \emph{how does convolution help Mamba to implement the optimal Laplacian estimator?} \looseness=-1

\begin{figure*}
\captionsetup[sub]{}
\centering
\begin{subfigure}{0.47\textwidth}
\centering
\includegraphics[width=\textwidth]{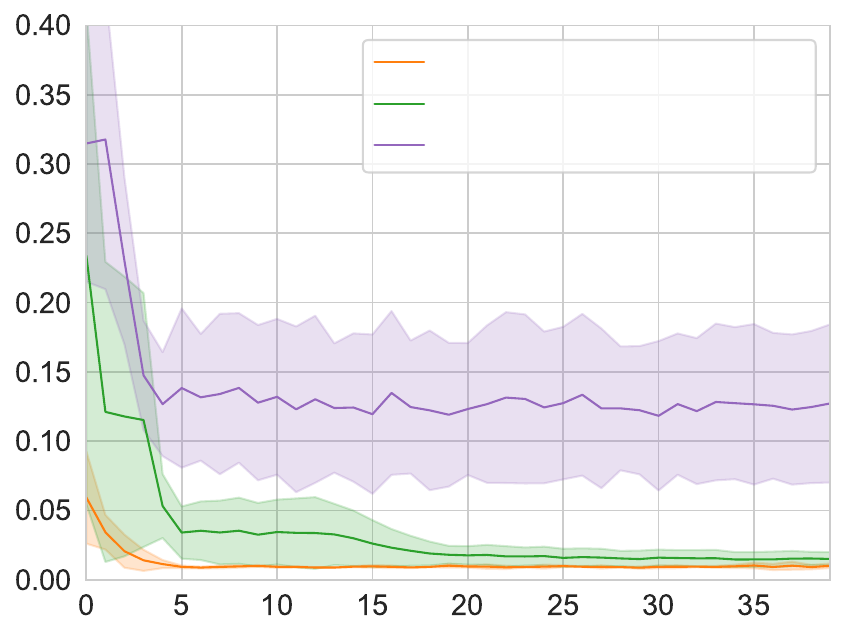} 
    \put(-110,-7){\fontsize{8}{3}\selectfont Iteration ($\times 200$)}
      \put(-196,70){\rotatebox[origin=t]{90}{\fontsize{8}{3}\selectfont Test loss}}
      \put(-91,124.5){\fontsize{7}{3}\selectfont $\mamba$}
      \put(-91,115.5){\fontsize{7}{3}\selectfont $\simplemamba$}
      \put(-91,106.5){\fontsize{7}{3}\selectfont $\mamba$ without convolution}
\caption{Importance of convolution}
\label{fig:conv-no-conv}
\end{subfigure}
\hfill
\begin{subfigure}{0.47\textwidth}
\centering
\includegraphics[width=\textwidth]{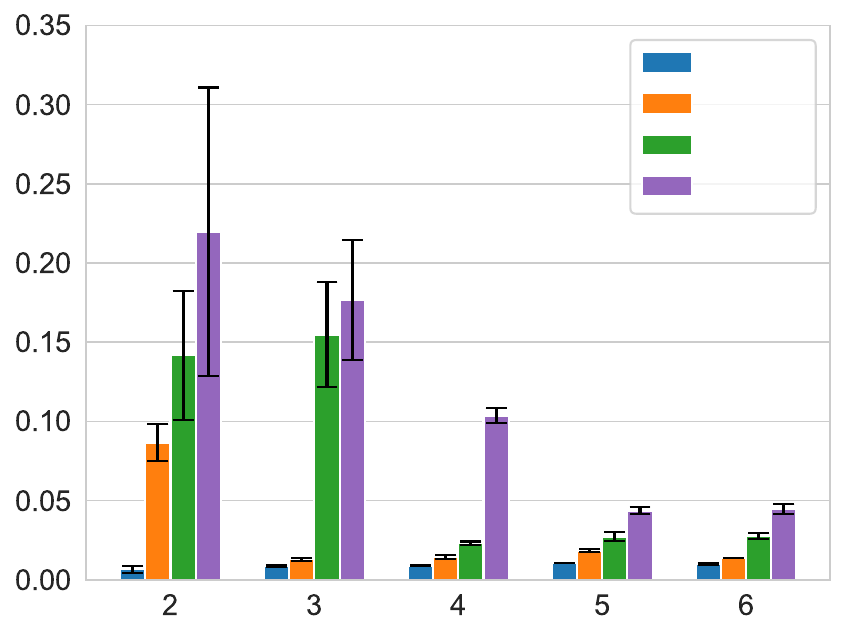} 
      \put(-140,-7){\fontsize{8}{3}\selectfont Convolution window of Mamba}
      \put(-196,70){\rotatebox[origin=t]{90}{\fontsize{8}{3}\selectfont Test loss}}
      \put(-31,124.5){\fontsize{7}{3}\selectfont Order $1$}
      \put(-31,115.5){\fontsize{7}{3}\selectfont Order $2$}
      \put(-31,106){\fontsize{7}{3}\selectfont Order $3$}
      \put(-31,97){\fontsize{7}{3}\selectfont Order $4$}
\caption{Relation between window size and Markov order}
\label{fig:conv-order}
\end{subfigure}
\caption{(a) illustrates the fundamental role of convolution, without which the model fails to learn the task. In contrast, a simplified variant with just the convolution ($\simplemamba$) matches the performance of the full model. (b) highlights the relation between the Markov order $k$ and the window size $w$ of $\mamba$. It is required that $w \geq k+1$ for the model to learn the order-$k$ prediction task.}
\label{fig:conv}
\end{figure*}

\section{How Mamba implements the Laplacian estimator}
\label{sec:theory}
Motivated by its success in learning the optimal estimator, here we study how Mamba represents Laplacian smoothing. Specifically, we provide a concrete theoretical construction backed by empirical results, that illustrates the mechanism Mamba uses to implement the estimator in practice. \looseness=-1


\subsection{MambaZero: Simplified model}
\label{sec:mambazero}
Building upon the insight that Mamba with just the convolution achieves the same performance as that of the full model (\prettyref{fig:conv-no-conv}), we consider its simplified version: $\simplemamba$.  $\simplemamba$ retains only the essential elements of the full model in \prettyref{sec:mamba_background}: the \ref{eq:embedding} layer, the convolution inside the Mamba block in \ref{eq:mamba-params}, and the \ref{eq:logit} layer. More formally, it is given by:
\noindent\begin{minipage}{0.55\textwidth}
    \begin{align}
        \bx_t &= \be_{x_t} \inrd, \tag{Embedding} \label{eq:embedding-simple} \\
        \bu_t &= \bx_t + \simplemamba(\bx_1^t), \tag{MambaZero} \label{eq:mamba-simple}\\
        \logit_t &= W_{\ell} \, {\bu_t} \inr{S}, \tag{Linear} \label{eq:logit-simple} \\
        f_{\btheta}(x_1^t) &= \pth{ \logit_t/\|\logit_t\|_1}, \tag{Prediction} \label{eq:prediction-simple}
    \end{align}
\end{minipage}%
\begin{minipage}{0.45\textwidth}
    \begin{equation}
    \begin{split}
    H_t &= a_t H_{t-1} + \tilde{\bx}_t \, \bb_t^\top \inr{e d \times N}, \\
    \by_t &= H_t\, \bc_t  \inr{ed}, \\
    \bo_t &= W_o\, \by_t \inr{d},
    \end{split}
    \tag{MambaZero}
    \label{eq:mamba-eqs-simple}
    \end{equation}
\end{minipage}

where $\be_x$ is the token embedding for $x\in\calX$, and the input-selective terms  $a_t, \tilde{\bx}_t, \bb_t$ and $\bc_t$ are computed as in \ref{eq:mamba-params} without ReLU and just the convolution. Here we use the $L_1$ normalization instead of the $\softmax$ in the \ref{eq:prediction-simple} layer to ease theoretical analysis, similar to \cite{nic2024trans,rajaraman2024transformersmarkovdataconstant}. Let $\btheta= (\{\be_i\}_{i\in\mathcal{X}}, \btheta_{\simplemamba} ,  W_\ell) \inr{D} $ denote the full set of parameters for appropriate $D \geq 1$. \looseness=-1

\subsection{Main theorem: Mamba represents the Laplacian estimator}
\label{sec:main_thm}

We now present our main theorem that $\simplemamba$ can represent Laplacian smoothing for any finite-state first-order Markov process. A key defining feature of our constructive proof is that it aligns with the structures empirically learned by the model, shedding light on the fundamental learning mechanisms of Mamba.

\begin{theorem}
\label{thm:order1}
For a state space $\mathcal{X}=\{1,2,\dots,S\}$ of size $\lvert\mathcal{X}\rvert=S$, there is a choice of parameters for the canonical $\simplemamba$ model, with dimensions $N = S$, $d = 2S,e =1$ and convolution window $w=2$, such that its output prediction exactly matches that of the Laplacian estimator, for first-order Markov chains on $\calX$. More formally, for any $\beta > 0$, there exists a set of parameters $\bm{\theta}$ such that, for all sequences $(x_t)_{t\geq 1}$ and all $t\geq 1$,
\begin{align*}
\KL{\mathbb{P}_\beta^{(1)}(\cdot \mid x_1^t)}{\thetaprob{\cdot \mid x_1^t}} = 0.    
\end{align*}
\end{theorem}
{\bf Remark.} The KL divergence above is precisely the penalty paid in the cross-entropy loss in \prettyref{eq:loss} at time $t$ when using the predictor $\mathbb{P}_{\btheta}$ instead of the optimal $\mathbb{P}_\beta^{(1)}$. In other words, the result implies that the loss of $\simplemamba$ can be made exactly equal to the optimal. \looseness=-1

\subsubsection{Key mechanism and proof sketch}
\label{sec:proof_sketch}
{\bf Main idea.} To build our intuition towards how $\simplemamba$ can realize the \adbeta counting estimator for first-order Markov sequences, let's focus on the core \ref{eq:mamba-eqs-simple} block. The key observation here is the following: if the state $H_{t-1}$ can capture all the transition counts $i \to j$ till $x_{1}^{t-1}$, the new state $H_t$ can be updated to account for the current transition $x_{t-1} \to x_t$ on top of the existing counts, by a suitable choice of $a_t, \tilde{\bx}_t$, and $\bb_t$. Then the relevant count information corresponding to the current prefix $x_t$ could be read off from the state projection $\by_t = H_t \bc_t$, and be modified to account for $\beta$-smoothing via the \ref{eq:logit-simple} and \ref{eq:prediction-simple} layers. Buttressing this idea are two key empirical facts, which in fact hold for any $k \geq 1$, underpinning our construction:

{\bf (i) State-to-state transition factor $a_t \approx 1$ for all $t \geq 1$.} We empirically observe that when the $\simplemamba$ model is trained on random first-order Markov data, at convergence we have $a_t \approx 1$ for all $t\geq1$ (\prettyref{fig:at}). Since $a_t$ modulates how much past information flows into the present, $a_t=1$ is required for the state $H_t$ to store all previous transition counts. Note that this can be easily achieved by setting either $a$ or $\Delta_t$ to be zero in \ref{eq:mamba-params}, which we empirically observe as well.


{\bf (ii) Convolution window $w \geq k+1$.} Recalling that $k$ is the Markov order, we empirically observe that the window size $w=k+1$ is sufficient for the full Mamba to learn the Laplacian smoothing on \kth Markov chains (\prettyref{fig:conv-order}). To understand why, note that in the $\simplemamba$ architecture above, apart from the \ref{eq:mamba-eqs-simple} block, all remaining equations operate on the current token at time $t$. In the \ref{eq:mamba-eqs-simple} block, the dependency of the output $\by_t$ on the previous tokens is due to that of the state $H_t$ on $(\tilde{\bx}_t, \bb_t)$ in the update equation, and of $\bc_t$ in the state projection. Since $(\tilde{\bx}_t, \bb_t, \bc_t)$ depend on the past through the convolutions, a window of size $k+1$ enables them to keep track of the current token as well as its length-$k$ prefix, which is necessary to compute the counts needed in \ref{eq:laplace_smooth}. On the other hand, if $w_X, w_B \leq k$, then one can find \emph{confusable} sequences, \ie sequences that share the same number of occurrences of all length-$k$ prefixes, but whose counts of the tokens following each prefix is different, resulting in the model's estimate to deviate from that of the optimal add-$\beta$. We refer to \S~\ref{sec:warmup} for more details. While having all the window sizes $w_X, w_B, w_c \geq k+1$ is sufficient, it can be further strengthened to $w_c = k$ (\S~\ref{sec:warmup}).


We now detail our construction for the first-order case, capitalizing on these insights. 

\noindent {\bf Construction.} Let us fix $w=k+1=2$. Then, $\tilde{\bx}_t$ and $\bb_t$ only depend on the current token $x_t$ and the previous one $x_{t-1}$, while $\bc_t$ only depends on $x_t$. Thus, $\tilde{\bx}_t$ and $\bb_t$ can only take $S^2$ possible values depending on the last transition in the sequence, whereas $\bc_t$ only $S$. To ease the notation, we will denote these values by $\tilde{\bx}^{(ij)}$, $\bb^{(ij)}$, and $\bc^{(i)}$ respectively, for $i,j \in\mathcal{X}$. Additionally, at $t=1$, these terms depend only on the current symbol, taking two additional values each, denoted by $\tilde{\bx}^{(i)}, \bb^{(i)}$. Let $n_{ij}$ denote the number of transitions $i \to j$ in the input sequence $x_1^t$. Then, unfolding the state update recursion in \ref{eq:mamba-simple}, we get that the output of the \ref{eq:mamba-simple} block is
\begin{equation}
\label{eq:out1}
\bo_t = W_o \, \tilde{\bx}_0 \bb_0^\top \bc_t + \sum_{ij}n_{ij}\, W_o \, \tilde{\bx}^{(ij)} \bb^{(ij)\top} \bc_t .
\end{equation}
While the output in \prettyref{eq:out1} depends on all the transition counts, in view of \ref{eq:laplace_smooth}, we ideally want only those counts pertaining to relevant transitions, \ie if $x_t = 0$, the counts $n_{0j}$, for $j\in\mathcal{X}$, and similarly for other values of $x_t$. To this end, we empirically observe that at convergence, the model's parameters are such that $\bb^{(ij)\top} \bc_t \approx 0$ whether $i \neq x_t$ (cf. \prettyref{fig:validation}).
Due to this property, only the counts that are involved in the computation of the Laplacian estimator for the current token $x_t$ appear in the output $\bo_t$. Stitching these facts, the final logits in the \ref{eq:logit-simple} layer depend on the first and current token via
\begin{equation}
    \logit_t = W_{\ell} \, \bx_t + W_{\ell}W_o \, \tilde{\bx}_0\bb^{\top}_0 \bc_t 
	+\sum_j n_{x_t,j} W_{\ell} W_o \, \tilde{\bx}^{(x_t,j)} \bb^{(x_t,j)\top} \bc_t.
\label{eq:logit-final}
\end{equation}
The final step is to then show that for properly chosen parameters, one can make the two vectors associated with the counts to be orthogonal, and the other vectors, independent of the counts, to sum up to the vector $\beta\bm{1}$ (which we also empirically verified, cf. \prettyref{fig:validation}). Subsequently, the $L_1$ normalization in \ref{eq:prediction-simple} layer will give a next-token probability estimate, matching that of the add-$\beta$ estimator. We defer the full proof and additional details to \S~\ref{app:proof1}.

{\bf Dimension reduction for binary state space.}
Interestingly, for the binary case  $\mathcal{X}=\binary$, it is possible to further reduce the hidden dimension $d=2S$ in \prettyref{thm:order1} to $d=S=2$ by leveraging the relationship between the transition counts. The key theoretical insight is that the transition counts in binary sequences are strongly correlated. Specifically, $n_{01}$ and $n_{10}$ are at most one apart: every time a transition $0\to 1$ occurs, either the sequence is followed by only $1$'s until the end, or a subsequent transition $1\to 0$ also occurs. Therefore, the dependency of the output on $n_{01}$ is in fact a dependency on $n_{10}$. One can leverage this property to help $\simplemamba$ realize Laplacian smoothing with just two-dimensional embeddings, with arbitrarily small error. We refer to \prettyref{thm:binary} in \S~\ref{sec:proof-binary} for full details.

\subsection{Lower bound: fundamental limit on the representation power of Mamba}
\label{sec:lower_bound}
We now provide a fundamental limit on the representation power of recurrent architectures like Mamba, in the form of a lower bound on the hidden dimension that is required to represent the optimal estimator. In particular, our result establishes that with finite bit precision, irrespective of depth, for any recurrent architecture to implement the Laplacian estimator, the hidden dimension has to at least scale as $\Omega(2^k)$. \looseness=-1

\begin{theorem}
\label{thm:lower-bound}
Consider a recurrent model of the form 
\begin{align*}
H_t &= h_t (H_{t-1},x_t), \\
y_t &= \thetaprob{\cdot \mid x_1^t} = g_t(H_t),
\end{align*} 
with transformations $(h_t,g_t)$, where $H_t \in \mathbb{R}^d$ and the model has a bit precision of $\mathtt{p}$. Suppose that the \kth Markov kernel $P$ is sampled from the Dirichlet prior with $\beta=1$, $P \sim \dir{1}$. Suppose also that the recurrent architecture satisfies the following point-wise guarantee: for any sufficiently large $t$, almost surely over $P$ and $x_1^t \sim P$,
\begin{equation} \label{eq:41}
    \left\| \thetaprob{\cdot \mid x_1^t} - \mathbb{P}_{1}^{(k)}(\cdot \mid x_1^t) \right\|_\infty \le \varepsilon,
\end{equation}
where $\mathbb{P}_{1}^{(k)}(\cdot \mid x_1^t)$ is the Laplacian estimator for $\beta=1$. Then, the recurrent architecture must satisfy
\begin{equation*}
    d \cdot \mathtt{p} \ge 2^k (1 - 3 \varepsilon) \log (1/\varepsilon).
\end{equation*}
\end{theorem}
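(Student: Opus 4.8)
The plan is a counting argument built on the fact that, in a recurrent model, the prediction at every step factors through the single state $H_t$, and with $\mathtt{p}$-bit precision $H_t$ ranges over at most $2^{d\mathtt{p}}$ values; hence $d\mathtt{p}$ is at least the logarithm of the number of inputs on which the model is \emph{forced} to behave distinguishably. I would (i) argue that $H_t$ must encode, to accuracy $O(\varepsilon)$, the add-$1$ estimate $\tfrac{n_1(c)+1}{n(c)+2}$ for \emph{every} length-$k$ context $c$, not merely the one currently active; (ii) construct long binary sequences realizing a fine grid of such count profiles; and (iii) count them against $2^{d\mathtt{p}}$.

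\textbf{Probing the state for per-context estimates.} Fix a large $t$ and a sequence $x_1^t$. Since $H_{t'} = h(H_{t'-1},x_{t'})$ depends only on the current state and future symbols, appending a fixed block $q_c$ ending in the context $c$ sends $H_t$ to a deterministic function of itself, and the next-token prediction there, $\hat p_c(H_t)$, is likewise a function of $H_t$ alone. Applying the hypothesis \eqref{eq:41} at the (still large) time $t+|q_c|$ gives $|\hat p_c(H_t) - \tfrac{n_1(c)+O(1)+1}{n(c)+O(1)+2}| \le \varepsilon$, where the $O(1)$ terms are the bounded, known count contributions of the short probe. Thus $H_t$ determines the whole vector $\big(\tfrac{n_1(c)+1}{n(c)+2}\big)_{c\in\{0,1\}^k}$ up to $\varepsilon$ plus an $O(1/n(c))$ perturbation, so two sequences whose such vectors are more than about $2\varepsilon$ apart in $\ell_\infty$ must land in different states.

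\textbf{Hard family and counting.} For each target $\vec v$ on a grid inside $[\delta,1-\delta]^{2^k}$, I would stitch together, context by context, a gadget that visits $c$ many (say $n \gg 1/\varepsilon$) times with its following-$1$ count chosen so that $\tfrac{n_1(c)+1}{n(c)+2}$ is within $o(\varepsilon)$ of $v_c$, and then parks the sequence; taking $n$ large washes out the probe contributions and the $O(1/n)$ perturbations. By the previous step, distinct grid points force distinct states, giving $d\mathtt{p} \ge \log(\#\text{grid})$. Here the assumption $\beta=1$ is what makes the geometry clean: under $P\sim\dir{1}$ the coordinates $P_c(1)$ are i.i.d.\ $\mathrm{Unif}[0,1]$ and the achievable ratios $n_1(c)/n(c)$ sweep all of $[0,1]$, so the sharpest route is the information inequality $d\mathtt{p}\ge H(H_t)\ge I\big((\hat p_c)_c;(P_c(1))_c\big)$ combined with a careful accounting of how tightly an $\varepsilon$-accurate estimate pins down a uniform vector; this is what should produce the factor $2^k(1-3\varepsilon)\log(1/\varepsilon)$, with the $(1-3\varepsilon)$ slack absorbing near-boundary grid cells and the $O(1)$ probe/finite-sample corrections.

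\textbf{Main obstacle.} The delicate part is the constant. A crude $\ell_\infty$-packing of $[0,1]^{2^k}$ at radius $\varepsilon$ only yields $d\mathtt{p}\ge 2^k\log\tfrac{1}{2\varepsilon}$, which is weaker than the claimed bound once $\varepsilon$ is small; reaching $2^k(1-3\varepsilon)\log(1/\varepsilon)$ requires the tighter estimate that exploits both that $(P_c(1))_c$ is \emph{exactly} uniform (differential entropy $0$) and that $H_t$ is discrete — a Fano-type step in which the $\varepsilon$-accurate readout localizes each $P_c(1)$ to an interval and one must handle carefully the contexts where the estimate sits near $0$, $1$, or a cell boundary. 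Secondary points to nail down are that a single probe block $q_c$ really drives the state into the ``context-$c$'' regime for \emph{every} member of the family (so the hypothesis applies with the advertised $O(1)$ shift), and that the finite-$t$ add-$1$ estimate is close enough to its Bayes ($t\to\infty$) limit $P_c(1)$ for the uniform-prior accounting to be legitimate.
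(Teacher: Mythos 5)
Your proposal follows essentially the same route as the paper: the state at time $t$ determines the model's prediction for every one of the $2^k$ length-$k$ continuations, each such prediction must $\varepsilon$-approximate the corresponding conditional $P(0\mid c)$ (the add-$1$ estimates converge to these almost surely as $t\to\infty$), these conditionals are i.i.d.\ $\mathrm{Unif}[0,1]$ under the $\beta=1$ prior, and a $\mathtt{p}$-bit, $d$-dimensional state has Shannon entropy at most $d\mathtt{p}$. The ``Fano-type step'' you defer is exactly the paper's auxiliary lemma --- the entropy of the $\varepsilon$-rounded uniform vector is at least $2^k(1-3\varepsilon)\log(1/\varepsilon)$, and subtracting the entropy of the bounded rounding error (which vanishes in the limit $t\to\infty$) gives the stated constant --- whereas your deterministic hard-family/packing construction is an unnecessary detour that, as you yourself note, only yields the weaker $2^k\log(1/2\varepsilon)$ bound.
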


We defer the full proof and additional details to \prettyref{app:lower-bound}.

{\bf Depth.} We note that \prettyref{thm:lower-bound} does not assume depth one, and holds for recurrent models of any depth.

{\bf Mamba vs. Transformers.} As \prettyref{thm:lower-bound} demonstrates, to capture a \kth Markov process, Mamba requires the hidden dimension to scale exponentially in $k$, whereas the best known result for transformers needs a three layer model with the hidden dimension growing linearly in $k$ \citep{rajaraman2024transformersmarkovdataconstant}. On the other hand, for first-order sources we empirically observe from \prettyref{fig:estimator-prob} that $1$-layer Mamba tracks the optimal estimator more sharply than a transformer (see \S~\ref{app:additional} for additional comparative results). While these comparisons are meant to provide a more detailed context for Mamba, we would like to emphasize that the main focus of our paper is not a comparative study but rather a \emph{fundamental understanding of Mamba's ICL abilities}. \looseness=-1


{\bf Higher orders and learning dynamics.} While \prettyref{thm:order1} demonstrates that Mamba can represent the optimal estimator for finite-state first-order processes, our empirical results in \prettyref{fig:estimator-norm} strongly suggest that a similar conclusion holds for higher-order sources. In a similar vein, analyzing Mamba's learning dynamics in its convergence to this smoothing estimator is an interesting topic of future research, but outside the scope of this paper, whose focus is on representation power. \looseness=-1


\section{Beyond Markov}
\label{sec:beyond}

\subsection{Switching Markov model}
\label{sec:switch}
A key component of \ref{eq:mamba_block} enabling selectivity is the state-transition factor $a_t$, that controls the flow of information from the past state $H_{t-1}$ to the current $H_t$: if $a_t=1$, the past information is fully utilized in computing the current state, and hence the output, whereas $a_t=0$ completely ignores the past. In the Markovian setting considered so far, the role of $a_t$ has largely been dormant: $a_t \approx 1$ for all $t\geq 1$, as the optimal Laplacian predictor requires counts of all transitions, demanding the use of full past (\prettyref{sec:main_thm}). To better highlight this selectivity mechanism, we consider a non-Markovian process, where the role of $a_t$ becomes fundamental. 

Specifically, we focus on the \emph{switching Markov process}, where we add a \emph{switch} token to the binary alphabet, \ie we consider $\calX = \{0,1,\mathrm{S}\}$. The key difference here compared to the random Markov generation in \prettyref{sec:markov_background} is that until we hit switch token, we follow the same binary Markov sequence generation as the former, but once the switch state is reached, we sample a new Markov kernel and then generate a new Markov sequence. The switch tokens are sampled according to a parallel \iid Bernoulli process with probability $p_{\rm switch}$ ($0.01$ in our experiments). The sampling process is described in detail in \S~\ref{sec:switch_markov_app}.
With this data model, the optimal prediction strategy is to use the add-$\beta$ estimator in between two switch tokens, and reset the transition counts every time a switch occurs. We provide empirical evidence in \S~\ref{sec:switch_markov_app}. Indeed, \prettyref{fig:mamba-switch} illustrates that Mamba implements precisely this strategy, closely tracking the switching events via the transition factor $a_t$: it sets $a_t$ to be zero whenever $x_t = \mathrm{S}$ and to one otherwise.

\subsection{Natural language modeling}
\label{sec:english}

\begin{wraptable}{r}{0.56\textwidth}
\caption{Perplexity results on the WikiText-$103$ dataset.}
\label{tab:lm_result}
\begin{tabular}{lll}
\toprule
\bf Model & \bf Params. &\bf Perplexity \\
\midrule
Mamba-$2$ (w/o conv)& $ 14.53$ M & $ 30.68 $ \\ 
Mamba-$2$ (w/ conv)& $14.54$ M &  $\bf{27.55}$ \\
\midrule
Transformer (w/o conv) & $14.46$ M & $29.28$ \\
Transformer (w/ conv) & $14.46$ M &  $\bf{28.67}$  \\
\bottomrule
\end{tabular}
\end{wraptable} 

To test the generality of our finding that convolution plays a key role on Markovian data (\prettyref{fig:conv}), we conduct experiments on language modeling using the WikiText-103 dataset. Details on the experimental setup can be found in \S~\ref{app:architecture}. By adding or removing convolution in both these models, we obtain the results in \autoref{tab:lm_result}. The results illustrate that convolution enhances the performance of the two architectures, in particular for Mamba ($11\%$ vs. $2\%$), highlighting its saliency. Further ablation studies on this task show that together with convolution, gating also plays a central role ($17\%$ change, cf. \S~\ref{sec:ablation}, \autoref{tab:gating}). Furthermore, additional experiments with deeper models show that the relative importance of convolution seems to decrease as the number of layers increases (cf. \S~\ref{sec:ablation}, \autoref{tab:12layers}). This may be due to the fact that the role of convolution is taken over by other Mamba layers, which are known to successfully approximate convolution \citep{wang2023state}.



\section{Conclusion}
Structured state space sequence models (SSMs) and Selective SSMs such as Mamba have shown remarkable inference speed-ups over transformers while achieving comparable or superior performance on complex language modeling tasks. In this paper, we studied in-context learning (ICL) capabilities of Mamba on random Markov chains and show that, unlike transformers, even a single-layer Mamba efficiently learns the in-context Laplacian smoothing estimator. To explain this, we theoretically and empirically characterized the representation capacity of Mamba, which revealed the fundamental role of convolution, together with selectivity and recurrence, in enabling it. We further provided additional empirical results on non-Markovian data, showing the generality of our insights. Extending our results to deeper Mamba models, as well as investigating Mamba's learning dynamics, are some interesting future directions.

\subsubsection*{Acknowledgments}
The work in this manuscript was partially supported by the Swiss National Science Foundation under Grant 200364.

\bibliography{main}
\bibliographystyle{iclr2026_conference}

\newpage
\appendix
\section{Preliminaries on Laplacian smoothing}
\label{app:laplace}

Laplacian smoothing is a mature and well understood topic. An account can be found, e.g., in~\cite{merhav1998,CesaBL:2006}, with some recent updates in~\cite{BondaschiG:24isit,bondaschi1}.
For the sake of completeness, we provide a brief outline of how it applies to our context. For $k$-th order Markov data, at every time instant $t$, the Laplacian add-$\beta$ estimator applied to the subsequence of tokens with the same context $i_1^k \in \calX^k$ as the current one is the predictor that minimizes the Bayesian cross-entropy loss in \prettyref{eq:loss}, when the Markov kernel is sampled according to the product Dirichlet distribution $\dir{\beta}$. We first give an intuition of why this is the case, and we provide a full proof at the end of the section. We consider the binary case $\calX = \binary$, but the results can be extended to arbitrary finite alphabets.

Consider a given sequence $(x_t)_{t=1}^T$. For every length-$k$ context $i_1^k \in \calX^k$, let $(x_t)|_{i_1^k}$ be the subsequence of tokens preceded by $i_1^k$. Note that, since each sequence $(x_t)$ is generated by a $k$-th order Markov chain, all the tokens in the sequence with the same length-$k$ prefix share the same conditional probability distribution. Furthermore, since each of the conditional distributions of the chain is randomly chosen independently from the others, the subsequence $(x_t)|_{i_1^k}$ is a sufficient statistic to estimate the probability distribution of all the tokens with the same prefix $i_1^k$. Therefore, the optimal prediction for a sequence $(x_t)_{t=1}^T$ is given by employing the optimal predictor for each i.i.d. subsequence $(x_t)|_{i_1^k}$, for every $i_1^k \in \calX^k$. Since each conditional distribution is sampled from a Dirichlet distribution with parameter $\beta$, it is well known that the optimal predictor for such subsequences is the add-constant estimator, with constant equal to $\beta$. More specifically, if $x_{t-k}^{t-1} = i_1^k$, then the optimal estimation for $x_t$ is
\begin{equation}
\label{eq:add-beta}
\betaprob{\beta}{k}{x_{t+1} = j \mid x_1^t} = \frac{n_j + \beta}{n + 2\beta}\ ,
\end{equation}
where $n_j$ is the number of times token $j$ appears in the subsequence $x_1^t|_{i_1^k} = (x_{\ell} \in x_1^t : x_{\ell-k}^{\ell-1} = i_1^k)$, and $n$ is the length of the subsequence. 

We now provide a formal proof of this fact.
\begin{theorem}
Consider the class of all $k$-th order Markov kernels $P = (P_{i_1^k})_{i_1^k \in \calX^k}$, where each $P_{i_1^k} = \mathbb{P}(\cdot \mid i_1^k)$ is a probability distribution on $\calX = \binary$. Let each $P_{i_1^k}$ be sampled i.i.d. from $\dir{\beta}$, and let $x_1^k \sim \unif{\calX^k}$ and $x_{t+1} | x_1^t \sim P_{x_{t-k+1}^t}$. Then, the predictor $f^{(j)}(x_1^t) = \hat{\mathbb{P}}(x_{t+1} = j \mid x_1^t)$, for $j\in\binary$, that minimizes the loss
\begin{equation}
L \define -\frac{1}{T} \sum_{t \in \set T} \Expect_P \Expect_{x_1^{t+1} \sim P}\big[x_{t+1} \cdot \log f^{(1)} (x_1^t) + (1- x_{t+1}) \cdot \log f^{(0)}(x_1^t) \big]
\end{equation}
is the add-$\beta$ estimator in \prettyref{eq:add-beta}, \ie the minimizer $f_*^{(j)}(x_1^t) = \betaprob{\beta}{k}{x_{t+1} = j \mid x_1^t}$, for all $t\geq k$.
\end{theorem}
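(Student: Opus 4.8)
The plan is to split the argument into (i) a reduction of the loss to a pointwise cross-entropy minimization and (ii) an explicit evaluation of the resulting Bayes-optimal predictor via Dirichlet--multinomial conjugacy. For step (i): since $f^{(j)}(\cdot)$ at each position is an arbitrary function of the observed prefix and the loss is an average over $t \in \set{T}$, the pairs $(f^{(0)}(s_1^t), f^{(1)}(s_1^t))$ indexed by distinct $(t, s_1^t)$ are free to be chosen independently. I would rewrite $\Expect_P \Expect_{x_1^{t+1}\sim P}$ by first drawing the prefix $x_1^t$ from its marginal and then $x_{t+1}$ from the $P$-averaged conditional $\prob{x_{t+1}=\cdot \mid x_1^t}$, so that the contribution of a fixed $(t,s_1^t)$ to $L$ equals, up to the positive factor $\prob{x_1^t = s_1^t}/T$, the cross-entropy $-\sum_{j\in\binary}\prob{x_{t+1}=j \mid x_1^t = s_1^t}\,\log f^{(j)}(s_1^t)$. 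By Gibbs' inequality (nonnegativity of $\KL{\cdot}{\cdot}$) this is minimized over the simplex precisely at $f_*^{(j)}(s_1^t) = \prob{x_{t+1}=j \mid x_1^t = s_1^t}$, so the optimal predictor is the Bayes posterior predictive obtained by marginalizing over the random kernel $P$.

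\textbf{Evaluating the posterior predictive.} For step (ii) I would fix $t \ge k$, write $i_1^k \define x_{t-k+1}^t$ for the current context, and apply the tower rule to get $\prob{x_{t+1}=1 \mid x_1^t} = \Expect\big[\,P_{i_1^k}(1)\,\big|\, x_1^t\,\big]$; thus it suffices to identify the posterior law of the single row $P_{i_1^k}$. The key is that the likelihood of $x_1^t$ given $P$ factorizes over contexts as $\prod_{j_1^k \in \calX^k}\prod_{c\in\binary} P_{j_1^k}(c)^{N_{j_1^k,c}(x_1^t)}$, where $N_{j_1^k,c}(x_1^t)$ counts the transitions $j_1^k \to c$ inside $x_1^t$ (the first $k$ symbols, being drawn from $\unif{\calX^k}$, are independent of $P$ and contribute only a constant), while the prior factorizes as $\prod_{j_1^k}\dir{\beta}$. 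Hence the posterior of $P$ factorizes over contexts, and Dirichlet--multinomial conjugacy gives that the marginal posterior of $P_{i_1^k}$ is $\mathrm{Dirichlet}(\beta + N_{i_1^k,0},\,\beta+N_{i_1^k,1})$, whose coordinate-$1$ mean is $\frac{\beta+N_{i_1^k,1}}{2\beta+N_{i_1^k,0}+N_{i_1^k,1}}$. Identifying $N_{i_1^k,j}=n_j$ (the count of $j$ in $x_1^t|_{i_1^k}$) and $N_{i_1^k,0}+N_{i_1^k,1}=n$ yields $\frac{n_1+\beta}{n+2\beta} = \betaprob{\beta}{k}{x_{t+1}=1 \mid x_1^t}$; the case $j=0$ is symmetric, establishing $f_*^{(j)} = \betaprob{\beta}{k}{x_{t+1}=j \mid x_1^t}$ for all $t\ge k$.

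\textbf{Loose ends and the main difficulty.} A few routine points should be verified: every binary prefix $s_1^t$ has strictly positive marginal probability (as $\dir{\beta}$ has full support for $\beta>0$), so the pointwise minimization is well posed at each $(t,s_1^t)$; the interchanges of expectation are legitimate since $\calX$ is finite and the optimum $f_*^{(j)}$ we exhibit is bounded away from $0$ and $1$; and the boundary case $t=k$ has an empty context-subsequence, so $n=0$ and the formula returns $\beta/(2\beta)=1/2$, matching $\prob{x_{k+1}=\cdot \mid x_1^k}$ because the first $k$ tokens are $P$-independent (for $t<k$ the next token is exactly uniform, consistent with the claim being stated only for $t \ge k$). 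The one genuinely substantive step is the factorization of the posterior over contexts together with the conjugacy identity --- equivalently, the intuition already noted in the text that $x_1^t|_{i_1^k}$ is a sufficient statistic for $P_{i_1^k}$ and, conditioned on its length, is an \iid sample from $P_{i_1^k}$. Once this is in place, the add-$\beta$ formula is just the mean of the updated Dirichlet and the rest is bookkeeping.
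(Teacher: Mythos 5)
Your proposal is correct and follows essentially the same route as the paper's proof: reduce the loss to a pointwise cross-entropy minimized (via nonnegativity of KL) at the Bayes posterior predictive, then identify that predictive with the add-$\beta$ formula using the fact that the posterior factorizes over contexts and $x_1^t|_{i_1^k}$ is a sufficient statistic for the row $P_{i_1^k}$. The only cosmetic difference is that you invoke Dirichlet--multinomial conjugacy abstractly where the paper evaluates the Beta integral with Gamma functions explicitly; these are the same computation.
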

\begin{proof}
First note that
\begin{align*}
L &= -\frac{1}{T} \sum_t \Expect_P \Expect_{x_1^{t+1} \sim P}\big[x_{t+1} \cdot \log f^{(1)} (x_1^t) + (1- x_{t+1}) \cdot \log f^{(0)}(x_1^t) \big] \\
&= -\frac{1}{T} \sum_t \Expect_{x_1^t} \Expect_{x_{t+1}|x_1^t} \big[x_{t+1} \cdot \log f^{(1)} (x_1^t) + (1- x_{t+1}) \cdot \log f^{(0)}(x_1^t) \big] \\
&= -\frac{1}{T} \sum_t \Expect_{x_1^t} \big[\Expect_{x_{t+1}|x_1^t}[x_{t+1}] \cdot \log f^{(1)} (x_1^t) + (1- \Expect_{x_{t+1}|x_1^t}[x_{t+1}]) \cdot \log f^{(0)}(x_1^t) \big].
\end{align*}
Let us define the distribution $f_*^{(1)}(x_1^t) \define \Expect_{x_{t+1}|x_1^t}[x_{t+1}]$ and $f_*^{(0)}(x_1^t) \define 1 - f_*^{(1)}(x_1^t)$. Then, we can rewrite the loss as
\begin{equation*}
L = \frac{1}{T} \sum_t \Expect_{x_1^t} \big[-f_*^{(1)}(x_1^t) \cdot \log f^{(1)} (x_1^t) - f_*^{(0)}(x_1^t) \cdot \log f^{(0)}(x_1^t) \big].
\end{equation*}
For every $t \in [T]$ and every $x_1^t \in \calX^t$, the term inside the expectation is minimized by picking $f^{(1)}(x_1^t) = f^{(1)}_*(x_1^t)$. In fact, note that it can be rewritten as
\begin{align*}
-f_*^{(1)}&(x_1^t) \cdot \log f^{(1)} (x_1^t) - f_*^{(0)}(x_1^t) \cdot \log f^{(0)}(x_1^t) \notag\\
&= f_*^{(1)}(x_1^t) \cdot \log \frac{f_*^{(1)}(x_1^t)}{f^{(1)} (x_1^t)} + f_*^{(0)}(x_1^t) \cdot \log \frac{f_*^{(0)}(x_1^t)}{f^{(0)}(x_1^t)} - f_*^{(1)}(x_1^t)\log f_*^{(1)}(x_1^t) \\
&\hspace{25em} - f_*^{(0)}(x_1^t) \log f_*^{(0)}(x_1^t) \\
&= \KL{f_*(x_1^t)}{f(x_1^t)} + H(f_*(x_1^t)),
\end{align*}
which is minimized when $\KL{f_*(x_1^t)}{f(x_1^t)} = 0$, i.e., when $f(x_1^t) = f_*(x_1^t)$. We will now show that $f_*(x_1^t)$ is precisely the add-$\beta$ estimator. Consider any context $i_1^k$ and any sequence $x_1^t$ such that $x_{t-k+1}^t = i_1^k$. Let also $p \define P_{i_1^k}(1) = \mathbb{P}(1\mid i_1^k)$. Then, 
\begin{align*}
f_*^{(1)}(x_1^t) &\define \Expect_{x_{t+1}|x_1^t}[x_{t+1}] \\
&= \Expect_{P_{i_1^k} | x_1^t} \Expect_{x_{t+1}|x_1^t, P_{i_1^k}}[x_{t+1}] \\
&= \Expect_{P_{i_1^k} | x_1^t} [P_{i_1^k}(1)] \\
&= \Expect_{P_{i_1^k} | x_1^t|_{i_1^k}} [P_{i_1^k}(1)],
\end{align*}
where in the last equation we used the fact that, when $x_1^k \sim \unif{\calX^k}$, the subsequence $x_1^t|_{i_1^k}$ is a sufficient statistic for $P_{i_1^k}$. Hence,
\begin{align*}
f_*^{(1)}(x_1^t) &= \Expect_{P_{i_1^k} | x_1^t|_{i_1^k}} [P_{i_1^k}(1)] \\
&= \int_0^1 \frac{p^{\beta-1}(1-p)^{\beta-1} p^{n_1} (1-p)^{n_0}}{\int_0^1 q^{\beta-1}(1-q)^{\beta-1} q^{n_1} (1-q)^{n_0} \, dq} \cdot p\, dp \\
&= \frac{\int_0^1 p^{n_1+\beta}(1-p)^{n_0+\beta-1} \, dp}{\int_0^1 q^{n_1+\beta-1}(1-q)^{n_0+\beta-1} \, dq} \\
&= \frac{\Gamma(n_1+\beta+1)\Gamma(n_0+\beta)}{\Gamma(n+2\beta+1)} \cdot \frac{\Gamma(n+2\beta)}{\Gamma(n_1+\beta)\Gamma(n_0+\beta)} \\
&= \frac{n_1 + \beta}{n+2\beta},
\end{align*}
where we used the fact that $P_{i_1^k} \sim \dir{\beta}$, that $\int_0^1 q^{z_1-1} (1-q)^{z_0-1} = \Gamma(z_1) \Gamma(z_0) / \Gamma(z_1 + z_0)$, and that $\Gamma(z+1) = z \Gamma(z)$.
\end{proof}

{\bf Remark.} The proof above is for $x_1^k \sim \unif{\calX^k}$. However, note that the same proof would also work for $x_1^k$ distributed according to any distribution that is independent of the Markov kernel $P$. If instead the distribution depends on $P$ (e.g., the stationary distribution of the Markov chain), then the proof would fail in the step where $x_1^t|_{i_1^k}$ is a sufficient statistic for $P_{i_1^k}$.

{\bf Remark.} It is important to note that, to be able to implement such a predictor requires in-context capabilities: at inference, in order to optimally predict the next token, the model must be able to look into the previous tokens of the test sequence, and count the tokens with the correct prefix. 

\clearpage
\section{Mamba-based language modeling architecture}
\label{app:full-llm}
\ref{eq:mamba_block} block can be incorporated into a full-fledged language model as follows: let $ x= (x_1, x_2, \cdots, x_T) \in \vocab^T $ be an input token-sequence over the alphabet $\calX$; here $\calX= \binary$ as explained in \prettyref{sec:markov_background}. Then, at every $t\in [T]$, the output of the language model $\btheta$ is given by the following sequence of equations \citep{dao2024transformers}:
\begin{align}
\bx_t  &= \be
_{x_t} \inrd, \tag{{Embedding}}     \label{eq:embedding}  \\
\bu_t &= \bx_t + \mamba(\bx_1^t) \inrd, \tag{{Mamba}}\label{eq:mamba} \\
\bv_t &= \bu_t + W_2[ \relu(W_1 \bu_t) \odot W_3 \bu_t ] \inrd, \tag{{\small MLP}} \label{eq:mlp} \\
\logit_t &= W_{\ell}\, \bv_t \inr{S}, \tag{{Linear}} \label{eq:logit} \\
f_{\btheta}(x_1^t) & \define \mathbb{P}_{\btheta}\pth{x_{t+1}= \cdot \mid x_1^t} = \softmax(\logit_t) \in [0,1]^S, \tag{{Prediction}} \label{eq:prediction}
\end{align}
where the parameters $\be_i\inrd, W_1 \inr{4d\times d}, W_2 \inr{d\times 4d}$ and $W_{\ell} \inr{S\times d}$ are learnable, and $\predprob$ is the probability law for the next symbol $x_{t+1}$ conditioned on the past $x_1^t$. We omit the layer norm here for simplicity. We compactly denote the set of all model parameters as $\btheta$, \ie $\btheta= (\{\be_i\}_{i\in\mathcal{X}}, \thetamamba, W_{1,2,3},  W_\ell) \inr{D} $. \looseness=-1

\clearpage
\section{Preliminaries and Proof of \prettyref{thm:order1} and \prettyref{thm:binary}}
\label{app:proof1}
\subsection{Empirical insights}
\label{sec:warmup}
Here we expand upon our empirical observations in \ref{sec:proof_sketch}, which form the basis of our proof.

{\bf State-to-state transition factor $a_t \approx 1$ for all $t \geq 1$.} We empirical evidence supporting this observation in \prettyref{fig:at}.

\begin{figure*}[!h]
\centering
\includegraphics[width=\textwidth]{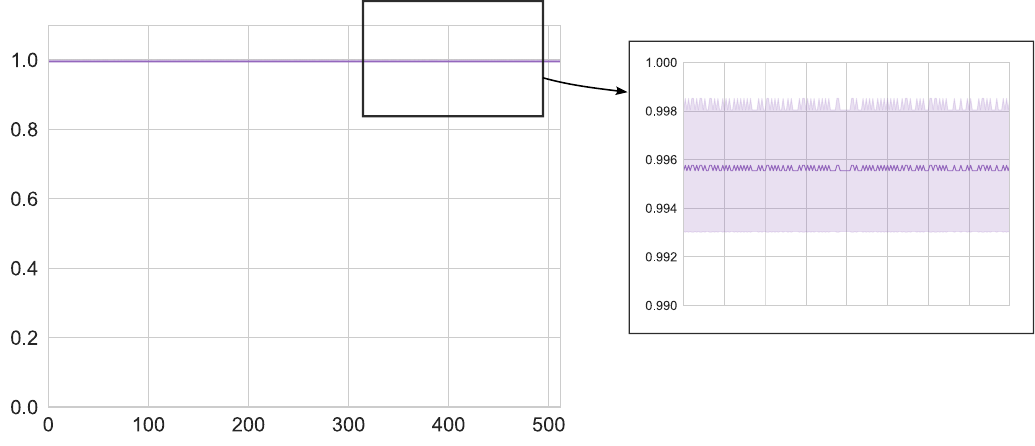} 
    \put(-300,-7){\fontsize{9}{3}\selectfont Position $t$}
      \put(-405,78){\rotatebox[origin=t]{90}{\fontsize{10}{3}\selectfont $a_t$}}
\caption{Value of $a_t$ across positions at convergence.}
\label{fig:at}
\end{figure*}

{\bf Convolution window $w \geq k+1$.} Recalling that $k$ is the Markov order, we empirically observe that the window that $w=k+1$ is sufficient for the full Mamba to learn the Laplacian smoothing on \kth Markov chains. To understand why, note that in the $\simplemamba$ architecture above, apart from the \ref{eq:mamba-eqs-simple} block, all remaining equations operate on the current token at time $t$. In the \ref{eq:mamba-eqs-simple} block, same as the \ref{eq:mamba_block} block except ReLU, the dependency of the output $\by_t$ on the previous tokens is due to that of the state $H_t$ on $(\tilde{\bx}_t, \bb_t)$ in the update equation, and of $\bc_t$ in the state projection. Since $(\tilde{\bx}_t, \bb_t, \bc_t)$ depend on the past through the convolutions, a window of size $k+1$ enables them to keep track of the current token as well as its length-$k$ prefix, which is necessary to compute the counts needed in \ref{eq:laplace_smooth}. On the other hand, if $w \leq k$, then one can find \emph{confusable} sequences, \ie sequences that share the same number of occurrences of all length-$k$ prefixes, but whose counts of the tokens following each prefix is different. 

For such sequences, the state $H_t$ is the same, and so are the predicted probabilities by the Mamba model; however, the optimal estimator, depending on the transition counts, would give very different probability estimates, allowing Mamba's prediction loss to deviate from that of the optimal. For example, consider $k=1$. If $w=1$, then $(\tilde{\bx}_t, \bb_t, \bc_t)$ depend only on the current token $x_t$. Then, consider the two sequences $x = (0,1,0,1,0,1)$ and $\tilde{x} = (0,0,0,1,1,1)$. At time $t=6$, these two sequences would give the same state $H_t$ and the same output $\by_t$, since they share the same number of tokens $0$ and $1$. Therefore, the estimated probability given by the model would be the same in both cases. However, the optimal add-constant estimator (with $\beta=1$) would estimate the probability of $x_{t+1}=1$ to be $1/4$ for $\bx$, and $3/4$ for $\tilde{\bx}$.

Further, \emph{it is sufficient that the convolution for $\bc_t$ has window $w_C = k$.} That is, the convolution $\conv_C$ involved in the computation of $\bc_t$ can have a window size equal to the Markov order $k$ (i.e., one less than $\conv_X$ and $\conv_B$) without affecting the model's capability of learning the task (or, equivalently, the left-most kernel coefficients of $\conv_C$ can be taken to be zero). Intuitively, this is because the role of $\bc_t$ in the state projection is to select the correct transition counts for the computation of the estimator, distilled into $\by_t$. In order to do so, it is sufficient to know the length-$k$ context of the current symbol $x_t$, which can be encoded by a convolution with window size $k$.

{\bf Orthogonal count-dependent vectors.} The inner products $\bm{b}^{(ij)\top} \bm{c}^{(k)}$ corresponding to $i\neq k$ go to zero at convergence: only the correct counts are kept in the final logit.

{\bf Convergence to the optimal $\beta = 1$.} The count-independent part of the final logit converges to $\beta=1$, corresponding to the optimal Laplacian estimator.

\begin{figure*}[h!]
\captionsetup[sub]{}
\centering
    \begin{subfigure}{0.49\textwidth}
    \centering
    \includegraphics[width=\textwidth]{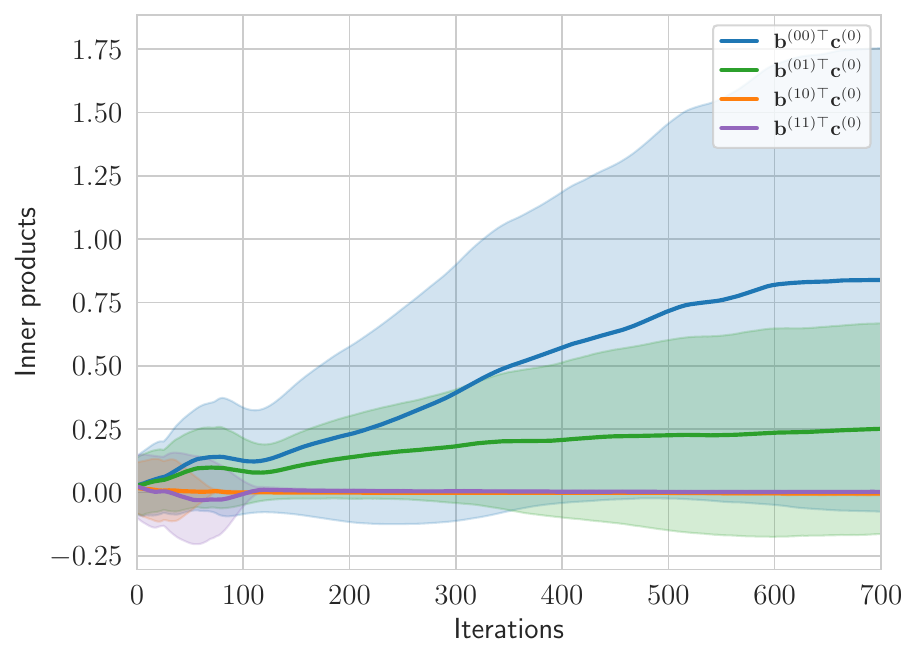} 
    \caption{Counts-related inner products}
    \end{subfigure}
\hfill
    \begin{subfigure}{0.49\textwidth}
    \centering
    \includegraphics[width=\textwidth]{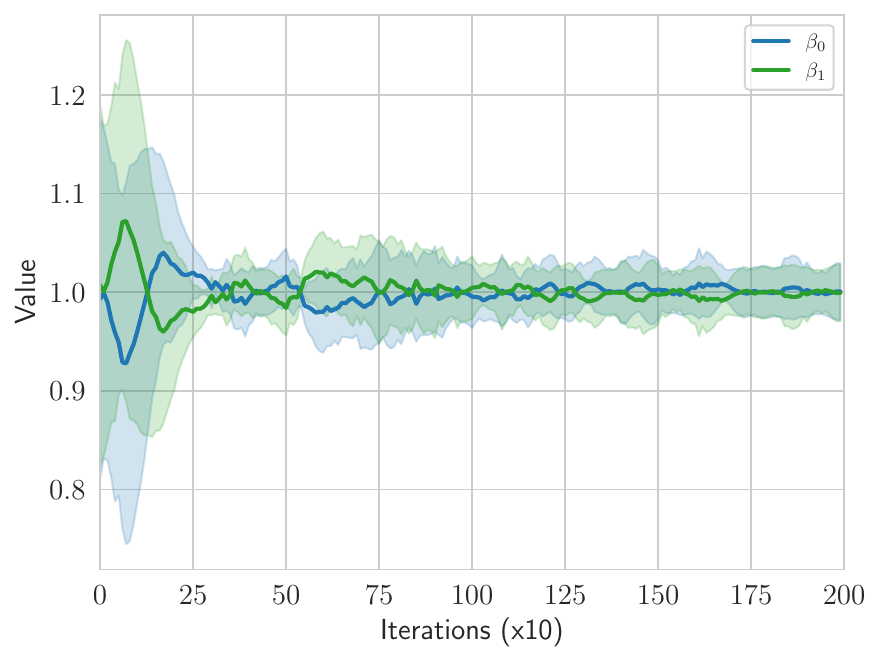}
    \caption{Counts-independent vector}
    \end{subfigure}
\caption{(a) Counts-related inner products across interations. Only the correct counts corresponding to $\bm{b}^{(ij)\top}\bm{c}^{(k)}$ for $i=k$ have a non-zero inner product at convergence. (b) Binary coordinates of the counts-independent vector across iterations. Both coordinates converge to the optimal $\beta=1$.}
\label{fig:validation}
\end{figure*}

\subsection{Proof of \prettyref{thm:order1}}
\label{sec:actual-proof1}
Let $\beta > 0$ be the constant of the considered add-constant estimator. Let us fix $a=0$ and $\Delta_t = 1$, so that $a_t = 1$, for all $t\geq 1$. This can be done by picking, e.g., $\bw_{\Delta} = \boldsymbol{0}$ and $\delta$ such that $\softplus(\delta) = 1$. Note that the application of convolution to a given sequence of vectors $\bz_1^t$ can be rewritten as a linear matrix-form operation. For example, for $\conv_X$, one has that
\begin{equation}
\conv_X(\bz_t) = D_X^{(0)} \bz_{t-1} + D_X^{(1)} \bz_t
\end{equation}
where $D_X^{(0)}$ and $D_X^{(1)}$ are diagonal matrices. The same holds for $\conv_B$ and $\conv_C$, with corresponding diagonal matrices $D_B^{(0)}, D_B^{(1)}, D_C^{(0)}$ and $D_C^{(1)}$.

Let us take the embedding vectors $\be_i, i\in\mathcal{X}$, to be the one-hot encoding vectors for the alphabet $\mathcal{X}$, interleaved with zeros, i.e., let $\be_i$ be such that $e_{i,2i-1} = 1$ and $0$ otherwise. Furthermore, take $W_X$ to be the $2S \times 2S$ matrix such that $W_X(i,j) = 1$ for $i = 2k$ and $j=2k-1$, for $1\leq k\leq S$, and $0$ otherwise. (The role of $W_X$ is shift each coordinate of the embedding vectors by one.)
Take now $\conv_X$ to be such that its output is simply equal to the current vector, i.e., take $D_X^{(0)} = \bm{0}$ and $D_X^{(1)} = I_{2S\times 2S}$.
Take also $W_B = I_{S\times S}$ and $\conv_B$ so that the output is equal to the second-to-last vector, i.e., take $D_B^{(0)} = I_{S\times S}$ and $D_X^{(1)} = \bm{0}$. Finally, take $W_C = I_{S\times S}$ and $\conv_C$ such that $C^{(0)} = \bm{0}$ and $C^{(1)} = I_{S\times S}$.

The final logit vector is in general equal to
\begin{equation}
\logit_t = W_{\ell} \bx_t + W_{\ell}W_o \, \tilde{\bx}^{(x_1)} \bb^{(x_1)\top} \bc_t + \sum_{ij}n_{ij}\, W_{\ell}W_o \, \tilde{\bx}^{(ij)} \bb^{(ij)\top} \bc_t.
\end{equation}
Using the matrices chosen above, we can simplify the formula as follows. Firstly, note that $\bb^{(x_1)} = \bm{0}$, as the $\bb$ vectors only depend on the second-to-last vectors. Furthermore, since the embedding vectors are orthogonal to each other, we have that $\bb^{(ij)\top} \bc_t = 1$ whether $i = x_t$, and $0$ otherwise. (That is, only the correct counts are kept in the logit computation.) With these simplifications, the logit formula becomes
\begin{equation}
\logit_t = W_{\ell} \bx_t + \sum_{j}n_{x_t,j}\, W_{\ell}W_o \, \tilde{\bx}^{(j)}.
\end{equation}
Finally, take $W_o = I_{2S\times 2S}$ and take $W_{\ell}$ such that $W_{\ell}(i) = \beta \bm{1}$ for all odd $i$, $W_{\ell}(2i,i)=1$ for $1\leq i\leq S$, and $0$ otherwise.
With this choice, we get, for all $t\geq 1$,
\begin{equation}
\logit_t = \beta\bm{1} + \sum_j n_{x_t,j} \be_i
\end{equation}
where $\be_i$ is the one-hot vector for symbol $i\in\mathcal{X}$.
After the normalization, we finally get
\begin{equation}
f_{\btheta}(x_1^t)_j = \frac{n_{ij} + \beta}{\sum_k n_{ik} + S\beta}
\end{equation}
if $x_t = i$, for $i\in\mathcal{X}$. This is precisely the required add-$\beta$ Laplacian estimator.

\subsection{\prettyref{thm:binary}: dimensionality reduction for the binary case}
\label{sec:proof-binary}
\begin{theorem}
\label{thm:binary}
For the canonical $\simplemamba$ model with dimensions $d = N = 2$, $e=1$, and convolution window $w=2$, there is a choice of parameters such that the model prediction is arbitrarily close to the Laplacian estimator for random first-order Markov chains. More formally, for any $\beta > 0$ and $\epsilon \in (0,1)$, there exists a set of parameters $\bm{\theta}$ such that, for all sequences $(x_t)_{t\geq 1}$ and all $t\geq 1$,
\begin{align*}
\KL{\mathbb{P}_\beta^{(1)}(\cdot \mid x_1^t)}{\thetaprob{\cdot \mid x_1^t}} \leq \epsilon.    
\end{align*}
\end{theorem}
\begin{proof}
Fix $\epsilon > 0$ and let $\beta > 0$ be the constant of the considered add-constant estimator. Let us fix $a=0$ and $\Delta_t = 1$, so that $a_t = 1$, for all $t\geq 1$. This can be done by picking, e.g., $\bw_{\Delta} = \boldsymbol{0}$ and $\delta$ such that $\softplus(\delta) = 1$. Let us compactly denote the convolution kernels as
\begin{equation}
\conv_X = \begin{pmatrix}
    \alpha_{00} &\alpha_{01} \\
    \alpha_{10} &\alpha_{11}
\end{pmatrix}, \qquad
\conv_B = \begin{pmatrix}
    \gamma_{00} &\gamma_{01} \\
    \gamma_{10} &\gamma_{11}
\end{pmatrix}
\end{equation}
where each row corresponds to the kernel weights applied time-wise to each coordinate of the input sequence $(\bx_t)_{t\geq 1}$. Since the window for $\conv_C$ is $w_C = 1$, we can simply assume w.l.o.g. that $C_t = W_C \bx_t$. 

Let us denote the embedding vectors to be $\be_0 = (e_{00}, e_{01})^\top$ and $\be_1 = (e_{10},e_{11})^\top$, and assume that the vectors are not collinear. Take also $W_X = W_B$ such that
\begin{equation}
W_X \,\be_0 = \begin{pmatrix}
    1 \\
    0
\end{pmatrix}, \qquad
W_X \,\be_1 = \begin{pmatrix}
    0 \\
    1
\end{pmatrix}
\end{equation}
and take $W_C$ such that
\begin{equation}
W_C \,\be_0 = \begin{pmatrix}
    c_0 \\
    0
\end{pmatrix}, \qquad
W_C \,\be_1 = \begin{pmatrix}
    0 \\
    c_1
\end{pmatrix}.
\end{equation}
Let us also take the kernels of $\conv_X$ and $\conv_B$ to be the same across coordinates, i.e., 
\begin{equation}
\conv_X = \begin{pmatrix}
    \alpha_{0} &\alpha_{1} \\
    \alpha_{0} &\alpha_{1}
\end{pmatrix}, \qquad 
\conv_B = \begin{pmatrix}
    \gamma_{0} &\gamma_{1} \\
    \gamma_{0} &\gamma_{1}
\end{pmatrix}
\end{equation}
such that the following conditions are satisfied:
\begin{equation}
\begin{cases}
\alpha_0\gamma_0 + \alpha_1\gamma_1 = 0 \\
\alpha_0\gamma_1 + \alpha_1\gamma_0 > 0 \\
\alpha_0 \neq \alpha_1 \\
\frac{\alpha_0 \gamma_1}{\alpha_0 \gamma_1 + \alpha_1 \gamma_0} = -\beta \epsilon
\end{cases}
\end{equation}
Note that, with such a choice of parameters, we have
\begin{equation}
X^{(0)} = \begin{pmatrix}
\alpha_1 \\
0
\end{pmatrix}, \qquad
X^{(1)} = \begin{pmatrix}
0 \\
\alpha_1
\end{pmatrix}, \qquad
B^{(0)} = \begin{pmatrix}
\gamma_1 \\
0
\end{pmatrix}, \qquad
B^{(1)} = \begin{pmatrix}
0 \\
\gamma_1
\end{pmatrix}
\end{equation}
\begin{equation}
C^{(0)} = \begin{pmatrix}
c_0 \\
0
\end{pmatrix}, \qquad
C^{(1)} = \begin{pmatrix}
0 \\
c_1
\end{pmatrix}
\end{equation}
\begin{equation}
X^{(00)} = \begin{pmatrix}
\alpha_0 + \alpha_1 \\
0
\end{pmatrix}, \qquad
X^{(01)} = \begin{pmatrix}
\alpha_0 \\
\alpha_1
\end{pmatrix}, \qquad
X^{(10)} = \begin{pmatrix}
\alpha_1 \\
\alpha_0
\end{pmatrix}, \qquad
X^{(11)} = \begin{pmatrix}
0 \\
\alpha_0 + \alpha_1
\end{pmatrix}
\end{equation}
\begin{equation}
B^{(00)} = \begin{pmatrix}
\gamma_0 + \gamma_1 \\
0
\end{pmatrix}, \qquad
B^{(01)} = \begin{pmatrix}
\gamma_0 \\
\gamma_1
\end{pmatrix}, \qquad
B^{(10)} = \begin{pmatrix}
\gamma_1 \\
\gamma_0
\end{pmatrix}, \qquad
B^{(11)} = \begin{pmatrix}
0 \\
\gamma_0 + \gamma_1
\end{pmatrix}.
\end{equation}
(We replaced the vector notation of \prettyref{sec:theory} with matrix notation, so that $X^{(0)}$ has to be intended as $\tilde{\bx}^{(0)}$, and so on.)
Take also $W_o = W_{\ell} = I$. With this choice of parameters, the final logit vector becomes, using \prettyref{eq:logit-final},
\begin{align}
\logit_t &= W_{\ell} \bigg(\bx_t + W_o X^{(x_1)}B^{(x_1)^\top} C^{(x_t)} + \mathbbm{1}_{\{x_1 \neq x_t\}} W_o X^{(x_1 x_t)} B^{(x_1 x_t)\top} C^{(x_t)} \notag\\
    &\hspace{1em}+ n_{x_t x_t} W_o \, X^{(x_t x_t)} B^{(x_t x_t)\top} C_t + n_{x_t \bar{x}_t} W_o \, \left(X^{(x_t \bar{x}_t)} B^{(x_t \bar{x}_t)\top} + X^{(\bar{x}_t x_t)} B^{(\bar{x}_t x_t)\top}\right) C_t\bigg)\\
    &= \begin{pmatrix}
    e_{00} + c_0 \alpha_1 \gamma_1 \\
    e_{01}
    \end{pmatrix} +
    \mathbbm{1}_{\{x_1 = 1\}} \cdot \begin{pmatrix}
    0 \\
    c_0 \alpha_0 \gamma_1
    \end{pmatrix} +
    n_{00} \cdot \begin{pmatrix}
    (\alpha_0 + \alpha_1)(\gamma_0 + \gamma_1) c_0 \\
    0
    \end{pmatrix} \notag\\
    &\hspace{25em}+n_{01} \cdot \begin{pmatrix}
    (\alpha_0 \gamma_0 + \alpha_1 \gamma_1) c_0 \\
    (\alpha_0 \gamma_1 + \alpha_1 \gamma_0) c_0
    \end{pmatrix} \\
    &= \begin{pmatrix}
    e_{00} + c_0 \alpha_1 \gamma_1 \\
    e_{01}
    \end{pmatrix} +
    \mathbbm{1}_{\{x_1 = 1\}} \cdot \begin{pmatrix}
    0 \\
    c_0 \alpha_0 \gamma_1
    \end{pmatrix} +
    n_{00} \cdot \begin{pmatrix}
    (\alpha_0 \gamma_1 + \alpha_1 \gamma_0) c_0 \\
    0
    \end{pmatrix} \notag\\
    &\hspace{25em}+ n_{01} \cdot \begin{pmatrix}
    0 \\
    (\alpha_0 \gamma_1 + \alpha_1 \gamma_0) c_0
    \end{pmatrix}
\end{align}
if $x_t = 0$, and
\begin{align}
\logit_t &= \begin{pmatrix}
    e_{10} \\
    e_{11} + c_1 \alpha_1 \gamma_1
    \end{pmatrix} +
    \mathbbm{1}_{\{x_1 = 0\}} \cdot \begin{pmatrix}
    c_1 \alpha_0 \gamma_1 \\
    0
    \end{pmatrix} +
    n_{10} \cdot \begin{pmatrix}
    (\alpha_0 \gamma_1 + \alpha_1 \gamma_0) c_1 \\
    (\alpha_0 \gamma_0 + \alpha_1 \gamma_1) c_1
    \end{pmatrix} \notag\\
    &\hspace{23em}+ n_{11} \cdot \begin{pmatrix}
    0 \\
    (\alpha_0 + \alpha_1)(\gamma_0 + \gamma_1) c_1
    \end{pmatrix} \\
    &= \begin{pmatrix}
    e_{10} \\
    e_{11} + c_1 \alpha_1 \gamma_1
    \end{pmatrix} +
    \mathbbm{1}_{\{x_1 = 0\}} \cdot \begin{pmatrix}
    c_1 \alpha_0 \gamma_1 \\
    0
    \end{pmatrix} +
    n_{10} \cdot \begin{pmatrix}
    (\alpha_0 \gamma_1 + \alpha_1 \gamma_0) c_1 \\
    0
    \end{pmatrix} \notag\\
    &\hspace{25em}+ n_{11} \cdot \begin{pmatrix}
    0 \\
    (\alpha_0 \gamma_1 + \alpha_1 \gamma_0) c_1
    \end{pmatrix}
\end{align}
if $x_t = 1$.
Take now
\begin{align}
e_{00} &= e_{11} = (\alpha_0 \gamma_1 + \alpha_1 \gamma_0) \beta c_0 - \alpha_1\gamma_1 c_0 \\
e_{01} &= e_{10} = (\alpha_0 \gamma_1 + \alpha_1 \gamma_0) \beta c_0 - \alpha_0\gamma_1 c_0
\end{align}
With this choice of parameters, after the layer normalization, the final output probability vector is
\begin{equation}
f_{\btheta}(x_1^t) = \left(\frac{n_{00} + \beta}{n_{00} + n_{01} + 2\beta + \mathbbm{1}_{\{x_1 = 0\}}\cdot\beta\epsilon} \ , \ \frac{n_{01} + \beta + \mathbbm{1}_{\{x_1 = 0\}}\cdot\beta\epsilon}{n_{00} + n_{01} + 2\beta + \mathbbm{1}_{\{x_1 = 0\}}\cdot\beta\epsilon}\right)^\top
\end{equation}
if $x_t = 0$, and
\begin{equation}
f_{\btheta}(x_1^t) = \left(\frac{n_{10} + \beta + \mathbbm{1}_{\{x_1 = 1\}}\cdot\beta\epsilon}{n_{10} + n_{11} + 2\beta + \mathbbm{1}_{\{x_1 = 1\}}\cdot\beta\epsilon} \ , \ \frac{n_{11} + \beta}{n_{10} + n_{11} + 2\beta + \mathbbm{1}_{\{x_1 = 1\}}\cdot\beta\epsilon}\right)^\top
\end{equation}
if $x_t = 1$.
Note that the resulting predicted probabilities exactly match the add-$\beta$ estimator when $x_1 \neq x_t$, but they are slightly different when $x_1 = x_t$ due to the additional $\beta\epsilon$ factor. We now show that, when the additional factor is present, the two predictors nevertheless differ by at most $\epsilon$ in KL distance. We show it for the case $x_1 = x_t = 0$, the other case follows in the same way. In fact, note that
\begin{equation}
\frac{n_{01} + \beta + \beta\epsilon}{n_{00} + n_{01} + 2\beta + \beta\epsilon} = \frac{n_{01} + \beta}{n_{00} + n_{01} + 2\beta} \cdot\frac{1 + \frac{\beta\epsilon}{n_{01}+\beta}}{1 + \frac{\beta\epsilon}{n_{00}+n_{01}+2\beta}}.
\end{equation}
Now, since
\begin{equation}
1 \leq 1 + \frac{\beta\epsilon}{n_{01}+\beta} \leq 1+\epsilon
\end{equation}
and
\begin{equation}
1 \leq 1 + \frac{\beta\epsilon}{n_{00}+n_{01}+2\beta} \leq 1+\epsilon
\end{equation}
we have that
\begin{equation}
\frac{n_{01} + \beta}{n_{00} + n_{01} + 2\beta} \leq \frac{n_{01} + \beta + \beta\epsilon}{n_{00} + n_{01} + 2\beta + \beta\epsilon} \cdot \left(1+\epsilon\right)
\end{equation}
but we also have
\begin{equation}
\frac{n_{00} + n_{01} + 2\beta + \beta\epsilon}{n_{00} + n_{01} + 2\beta} \leq 1 + \frac{\beta\epsilon}{n_{00} + n_{01} + 2\beta} \leq 1 + \epsilon,
\end{equation}
so that
\begin{align}
D_{\mathrm{KL}}\left(\betaprob{\beta}{1}{\cdot \mid x_1^t} \| \thetaprob{\cdot \mid x_1^t}\right) &= \betaprob{\beta}{1}{x_{t+1}=0 \mid x_1^t} \log \frac{\betaprob{\beta}{1}{x_{t+1}=0 \mid x_1^t}}{\thetaprob{x_{t+1}=0 \mid x_1^t}} \notag\\
    &\hspace{6em}+ \betaprob{\beta}{1}{x_{t+1}=1 \mid x_1^t} \log \frac{\betaprob{\beta}{1}{x_{t+1}=1 \mid x_1^t}}{\thetaprob{x_{t+1}=1 \mid x_1^t}} \\
    &= \frac{n_{00} + \beta}{n_{00} + n_{01} + 2\beta} \log \frac{\frac{n_{00} + \beta}{n_{00} + n_{01} + 2\beta}}{\frac{n_{00} + \beta}{n_{00} + n_{01} + 2\beta + \beta\epsilon}} \notag\\
    &\hspace{10em}+ \frac{n_{01} + \beta}{n_{00} + n_{01} + 2\beta} \log\frac{\frac{n_{01} + \beta}{n_{00} + n_{01} + 2\beta}}{\frac{n_{01} + \beta + \beta\epsilon}{n_{00} + n_{01} + 2\beta + \beta\epsilon}} \\
    &\leq \frac{n_{00} + \beta}{n_{00} + n_{01} + 2\beta} \log(1+\epsilon) + \frac{n_{01} + \beta}{n_{00} + n_{01} + 2\beta} \log(1+\epsilon) \\
    &\leq \log(1+\epsilon) \\
    &\leq \epsilon
\end{align}
concluding the proof.
\end{proof}

\clearpage
\section{Proof of \prettyref{thm:lower-bound}}
\label{app:lower-bound}
Consider a recurrent model of the form $H_t = h (H_{t-1},x_t)$ and $y_{t} = g(H_t)$ for each $t \ge 1$ where $H_t \in \mathbb{R}^d$ and the model has a bit precision of $p$. In this proof, we will assume that the state space of the underlying Markov chain is $\{0,1\}$. By the recurrent architecture, the predicted distribution over the next token $x_{t+k+1}$ is of the form,
\begin{equation}
y_{t+k} = \thetaprob{x_{t+k+1} = z \mid x_1^{t+k}} = g(z, H_{t+k}).
\end{equation}
Recall that the add-$1$ estimator is defined as,
\begin{align}
    \frac{n(z,x_{t+1}^{t+k}) + 1}{n(x_{t+1}^{t+k}) + 2},
\end{align}
where $n(z_1^k) = \sum_{i=1}^{t+1} \mathbb{I} (x_{i}^{i+k-1} = z_1^k)$ indicates the number of times $z_1^k$ appears in the sequence. This is the optimal estimator for sequences drawn from the product-Dirichlet prior: for every $i_1^k$, $P(\cdot \mid i_1^k) \sim \dir{1}$, which is the distribution we will assume for this proof. Fixing $x_1^t$, we can write the add-$1$ estimator more explicitly as a function of $x_{t+1}^{t+k}$ as,
\begin{align}
    \betaprob{1}{k}{x_{t+k+1} = z \mid x_1^t, x_{t+1}^{t+k} = z_1^k} = \frac{n(z_1^k,z) + 1}{n(z_1^k) + 2}.
\end{align}
Now, fixing $x_1^t$, correctness of the recurrent model means that, almost surely over $P$ drawn from the prior, and $x_1^t \sim P$ and $z_1^k \sim P (\cdot | x_1^t)$,
\begin{align} \label{eq:44}
    g ( x_{t+k+1} = 0, H_{t+k} ) \in \betaprob{1}{k}{ x_{t+k+1} = 0 \mid x_1^t, x_{t+1}^{t+k} = z_1^k} + [-\varepsilon,\varepsilon].
\end{align}
where $H_{t+k}$ is a function of $x_1^t$ and $z_1^k$.
As $t \to \infty$, under the randomness of the draw of $x_1^t \sim P$, by the strong law of large numbers RHS converges almost surely to the conditional distribution under $P$, almost surely over the choice of $P$ from the product-Dirichlet prior. Here we use the fact that for $P$ drawn from the product-Dirichlet prior, $P(z|z_1^k) > 0$ almost surely, and so the resulting distributions are exponentially mixing and ergodic. Namely, for each $z_1^k \in \{0,1\}^k$, almost surely over $P$ drawn from the product-Dirichlet prior,
\begin{align}
    \Pr \left( \limsup_{t \to \infty} \left|\betaprob{1}{k}{ x_{t+k+1} = 0 \mid x_1^t, x_{t+1}^{t+k} = z_1^k} - P(0 | z_1^k ) \right| > \gamma ) \right) = 0
\end{align}
for any $\gamma > 0$. Therefore, a necessary condition to satisfy \cref{eq:44} is, for each $z_1^k \in \mathcal{X}^k$,
\begin{align}
    g ( x_{t+k+1} = 0, H_{t+k} ) \in P(0|z_1^k) + [-\varepsilon-\eta_P (t),\varepsilon+\eta_P (t)].
\end{align}
for some $\eta_P (t)$, which is a function of $P$ satisfying $\limsup_{t \to \infty} \eta_P (t) = 0$ almost surely over $P$ drawn from the prior; note that $H_{t+k}$ is implicitly a function of $x_1^t$ and $z_1^k$. Divide the interval $[0,1]$ into $1/\varepsilon$ disjoint intervals of size $\varepsilon$ each.  Recall that $P (\cdot|z_1^k) \sim \rho = \dir{1}$, which implies that the random variable $P (0|z_1^k)$ for each fixed $z_1^k$ (randomness is over $P$) is distributed as,
\begin{align}
    \Pr_\rho \left[ P (0|z_1^k) = \cdot \middle| z_1^k \right] = \unif{[0,1]}.
\end{align}
Consider the buckets $B_{\varepsilon} = \{ [0,\varepsilon), [\varepsilon,2\varepsilon), \cdots, [1-\varepsilon,1]\}$. Define the function $\round (p) : [0,1] \to \{ 0,\cdots,|B_{\varepsilon}|-1 \}$ to return the index of the bucket in $B_\varepsilon$ such that $p$ falls in that bucket.

\begin{lemma} \label{lemma:1}
Consider any function $f (z_1^k) : \mathcal{X}^k \to \{ 0,\cdots, |B_\varepsilon| - 1 \}$ such that, pointwise,
\begin{align}
    |\round ( P(0|z_1^k)) - f(z_1^k)| \le r.
\end{align}
Then, when $P(0|z_1^k) \overset{\text{i.i.d.}}{\sim} \dir{1}$,
\begin{align}
    H_{\text{Shannon}} ( \{ f(z_1^k) : z_1^k \in \{0,1\}^k \} ) \ge 2^k \left( (1 - 3 \varepsilon) \log (1/\varepsilon) - \log (2r+1)\right)
\end{align}
where the randomness is over the draw of $P$ and $H_{\text{Shannon}}$ is the discrete Shannon entropy.
\end{lemma}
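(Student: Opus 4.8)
The plan is to read this as a standard source-coding bound: the random vector $(f(z_1^k))_{z_1^k\in\binary^k}$ must encode almost all of the information contained in the i.i.d.\ vector of \emph{rounded} transition probabilities, because its $z_1^k$-th coordinate pins the corresponding rounded probability down to one of only $2r+1$ values.

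First I would record the distributional facts. Under the product-Dirichlet prior with $\beta=1$, for each context $z_1^k$ the number $P(0\mid z_1^k)$ is the first coordinate of a $\dir{1}$ draw on the $2$-simplex, hence $P(0\mid z_1^k)\sim\unif{[0,1]}$, and the $2^k$ such variables are mutually independent (one fresh draw per context). Write $N\define 2^k$, fix an enumeration $z^{(1)},\dots,z^{(N)}$ of $\binary^k$, and set $R_i\define\round(P(0\mid z^{(i)}))$ and $F_i\define f(z^{(i)})$. Every bucket of $B_\varepsilon$ has width at most $\varepsilon$, so $\Pr[R_i=v]\le\varepsilon$ for every $v$; consequently $H_{\text{Shannon}}(R_i)=\sum_v\Pr[R_i=v]\log(1/\Pr[R_i=v])\ge\log(1/\varepsilon)$, and by independence $H_{\text{Shannon}}(R_1,\dots,R_N)\ge N\log(1/\varepsilon)$. (This already beats the claimed $(1-3\varepsilon)$ factor, which leaves room to spare for edge cases when $1/\varepsilon\notin\mathbb{N}$.)

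Next I would use the pointwise hypothesis $|R_i-F_i|\le r$: conditioned on $F_i=v$, the variable $R_i$ is supported on the at most $2r+1$ integers $\{v-r,\dots,v+r\}$, so $H_{\text{Shannon}}(R_i\mid F_i)\le\log(2r+1)$. The chain rule together with ``conditioning reduces entropy'' then gives
\[
H_{\text{Shannon}}(R_1,\dots,R_N\mid F_1,\dots,F_N)\le\sum_{i=1}^N H_{\text{Shannon}}(R_i\mid F_i)\le N\log(2r+1),
\]
and hence, via $H_{\text{Shannon}}(F)\ge H_{\text{Shannon}}(R)-H_{\text{Shannon}}(R\mid F)$,
\[
H_{\text{Shannon}}(F_1,\dots,F_N)\ge N\log(1/\varepsilon)-N\log(2r+1)\ge 2^k\bigl((1-3\varepsilon)\log(1/\varepsilon)-\log(2r+1)\bigr).
\]
Since $(F_1,\dots,F_N)$ is exactly the vector $\{f(z_1^k):z_1^k\in\binary^k\}$ up to relabelling, this is the claim.

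The argument itself is short, and there is no real obstacle inside the lemma; the only points needing a line of care are that $f$ is permitted to be an arbitrary (measurable) function of $P$ and of whatever auxiliary randomness the recurrent model sees — harmless here, since the bound uses only the joint law of $(R_i,F_i)$ through the deterministic constraint $|R_i-F_i|\le r$, never any structure of $f$ — and the book-keeping of the chain rule over the $2^k$-element index set. The genuinely load-bearing step is instead how the lemma is invoked in the proof of \prettyref{thm:lower-bound}: one must argue that the whole vector $(f(z_1^k))_{z_1^k}$ is a deterministic function of the hidden state $H_t$ at the ``fork'' time $t$ (from $H_t$, the $k$ continuation tokens $z_1^k$ deterministically drive $H_t\to H_{t+k}$, hence determine $f(z_1^k)$), which caps $H_{\text{Shannon}}((f(z_1^k))_{z_1^k})\le d\,\mathtt{p}$; combined with the lemma and the value of $r$ forced by the accuracy hypothesis \eqref{eq:41}, this yields the stated dimension--precision lower bound.
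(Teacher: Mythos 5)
Your proof is correct and follows essentially the same route as the paper's: a per-context entropy lower bound of order $\log(1/\varepsilon)$ for the rounded probabilities (yours, via $\Pr[R_i=v]\le\varepsilon$, is in fact slightly sharper than the paper's $(1-3\varepsilon)\log(1/\varepsilon)$ accounting of edge buckets), followed by subtracting at most $\log(2r+1)$ bits per context for the residual. The paper phrases the second step with an explicit error variable $e(z_1^k)=\round(P(0\mid z_1^k))-f(z_1^k)$ and the subadditivity bound $H(f,e)\ge H(\round(P))$, which is equivalent to your conditional-entropy estimate $H(R\mid F)\le 2^k\log(2r+1)$.
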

\begin{proof}
Recall that $P(0|z_1^k) \overset{\text{i.i.d.}}{\sim} \text{Unif} ([0,1])$ across $z_1^k \in \mathcal{X}^k$. Then,
\begin{align}
    \Pr (\round (P(0|z_1^k)) = j) &= \Pr (P(0|z_1^k) \in [\varepsilon(j-1/2),\varepsilon(j+1/2))) \\
    &= \begin{cases}
        \varepsilon \quad &\text{if } 1 \le j \le |B_\varepsilon|-2, \\
        3\varepsilon/2 &\text{if } j = 0 \text{ or } j = |B_\varepsilon|-1.
    \end{cases}
\end{align}
This implies that, by independence of the $P (0|z_1^k)$'s across $z_1^k \in \mathcal{X}^k$,
\begin{align}
    H_{\text{Shannon}} \left( \left\{ P(0|z_1^k) : z_1^k \in \mathcal{X}^k \right\} \right) \ge |\mathcal{X}|^k  (1 - 3 \varepsilon) \log (1/\varepsilon).
\end{align}
Let $e(z_1^k)$ be the random variable $P(0|z_1^k) - f(z_1^k)$. $P(0|z_1^k)$ is a measurable function of $f(z_1^k)$ and $e (z_1^k)$, and therefore,
\begin{align}
    H_{\text{Shannon}} \left( \left\{ f(z_1^k) : z_1^k \in \mathcal{X}^k \right\} \cup \left\{ e(z_1^k) : z_1^k \in \mathcal{X}^k \right\} \right) \ge H_{\text{Shannon}} \left( \left\{ P(0|z_1^k) : z_1^k \in \mathcal{X}^k \right\} \right)
\end{align}
Note that $e (z_1^k)$ is bounded in the rate $\{ -r,\cdots,r\}$ and can take at most $2r+1$ values. Therefore, $H \left( \left\{ e(z_1^k) : z_1^k \in \mathcal{X}^k \right\} \right) \le |\mathcal{X}|^k \log (2r+1)$. Since $H(A,B) \le H(A) + H(B)$, we have that,
\begin{align}
    H_{\text{Shannon}} \left( \left\{ f(z_1^k) : z_1^k \in \mathcal{X}^k \right\} \right) &\ge H_{\text{Shannon}} \left( \left\{ P(0|z_1^k) : z_1^k \in \mathcal{X}^k \right\} \right) - H \left( \left\{ e(z_1^k) : z_1^k \in \mathcal{X}^k \right\} \right) \\
    &\ge |\mathcal{X}|^k \left( (1 - 3 \varepsilon) \log(1/\varepsilon) - \log (2r+1) \right).
\end{align}
\end{proof}

Recall that we are guaranteed that $g(x_{t+k+1} = 0, H_{t+k}) \in P(0|z_1^k) + [-\varepsilon-\eta_P (t),\varepsilon+\eta_P (t)]$. This implies that the recurrent model is able to recover $\round(p)$ for $p = P(0|z_1^k)$ up to an error of $r=\lceil \eta(t)/\varepsilon \rceil$ for each $z_1^k \in \mathcal{X}^k$ by computing $\round (\hat{p})$ where $\hat{p} = g(x_{t+k+1} = 0, H_{t+k})$. Informally, this just means that $\hat{p}$ is likely to fall in a bucket close to $p$. In combination with \Cref{lemma:1}, for $f(z_1^k) = g(x_{t+k+1} = 0, H_{t+k})$ we have that,
\begin{align}
    H_{\text{Shannon}} ( \{ g(x_{t+k+1} = 0, H_{t+k}) : z_1^k \in \{0,1\}^k \} ) \ge 2^k \left( (1 - 3 \varepsilon) \log (1/\varepsilon) - \log (2 \lceil \eta_P (t)/\varepsilon \rceil +1)\right)
\end{align}
Note however, that $g(x_{t+k+1} = 0, H_{t+k})$ is a function of $z_1^k$ implicitly, through $H_{t+k}$ (which is also a function of $x_1^t$). Since the dimensionality of $H_{t+k}$ is $d$ and the model is implemented to $\mathtt{p}$ bits of precision,
\begin{align}
    H_{\text{Shannon}} ( \{ g(x_{t+k+1} = 0, H_{t+k}) : z_1^k \in \{0,1\}^k \} )  \le H_{\text{Shannon}} ( H_{t+k} ) \le d \mathtt{p}
\end{align}
where all randomness here is induced by the random draw of the \kth Markov kernel $P$. Therefore, for the correctness guarantee \Cref{eq:44} to hold, we need,
\begin{align}
    d \mathtt{p} \ge 2^k \left( (1 - 3 \varepsilon) \log (1/\varepsilon) - \log (2 \lceil \eta_P (t)/\varepsilon \rceil +1)\right)
\end{align}
in the limit $t \to \infty$, and noting that $\limsup_{t \to \infty} \eta_P (t) = 0$ almost surely over $P$ drawn from the prior, it is necessary that,
\begin{align}
    d \mathtt{p} \ge 2^k (1 - 3 \varepsilon) \log (1/\varepsilon).
\end{align}
\qed

\prettyref{fig:lower-bound} provides empirical evidence that the exponential dependency of the theorem on the order $k$ is consistent.

{\bf Remark.} The proof above assumes that the \kth Markov chain is on a binary state space. However, the result can easily be extended to give the lower bound $d \cdot \mathtt{p} \ge \Omega(|\mathcal{X}|^k)$ for larger state spaces, as well as similar scaling results for priors $\dir{\beta}$ for any $\beta > 0$. Furthermore, we believe it should be possible to replace the $L_\infty$ error guarantee in \cref{eq:41} by the KL-divergence between the two distributions without significantly changing the conclusion ($d \cdot \mathtt{p} = 2^{\Omega(k)}$).

{\bf Intuition.} The intuition behind this result is in the manner in which the recurrent architecture carries out computation: by proceeding sequentially and compressing the information from the sequence it has seen thus far at some time $t$ into a small hidden vector, the model does not know what the next $k$ tokens will be: the knowledge of this is vital to be able to compute the add-$\beta$ estimator at time $t+k+1$ with a small memory footprint. Indeed, when the identity of the next $k$ tokens changes, the output of the model at time $t+k+1$ must look drastically different (as the add-$\beta$ estimator corresponds to approximately evaluating $\mathbb{P}(\cdot|i_1^k)$, which are unrelated distributions under different choices of $i_1^k$). There are $\sim 2^{2^k}$ possible values the set $P = \{ \mathbb{P}(\cdot|i_1^k) : i_1^k \in \{0,1\}^k \}$ can take. But when $d$ and $\mathtt{p}$ are small, the output of the model just cannot take so many values: it can realize at most $2^{d\mathtt{p}}$ possible sets. In other words, in order to succeed, the recurrent architecture is essentially forced to keep track of the number of occurrences of each $i_1^k \in \{ 0,1 \}^k$ in the sequence at each time $t$, which costs an exponential dependence on $k$ in the hidden dimension/precision.

\begin{figure*}[h!]
\centering
\includegraphics[width=0.6\textwidth]{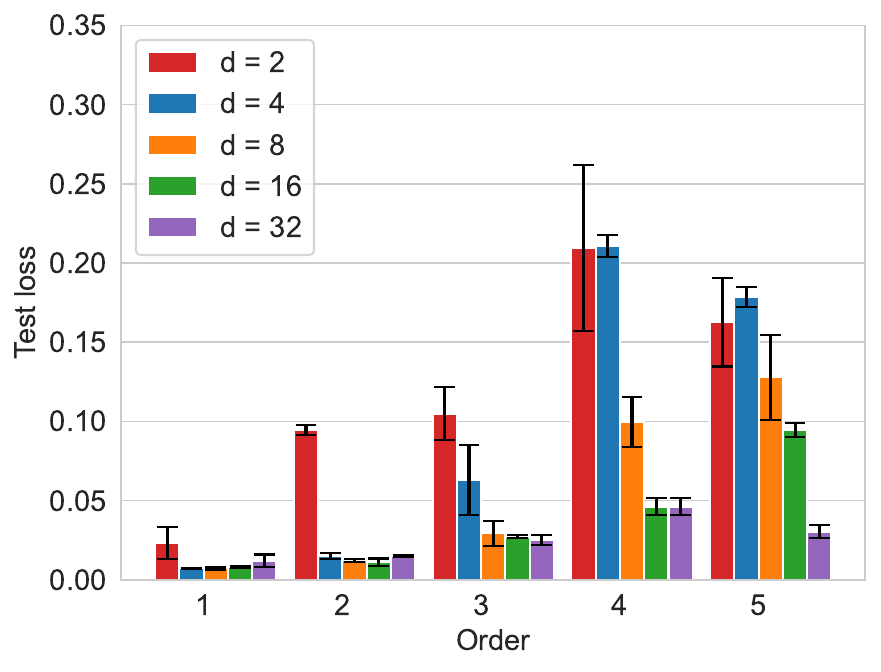} 
\caption{Relation between the Markov order $k$ and the hidden dimension $d$ of the 1-layer Mamba model. The plot shows that $d=2^k$ is sufficient for the model to learn the $k$-th order Markov task. This corroborates the fact that Theorem 2 in the main paper has the correct order dependency.}
\label{fig:lower-bound}
\end{figure*}

\clearpage
\section{Additional results}
\label{app:additional}

\subsection{Linear attention}
\label{sec:lin-att}

\begin{figure*}[h!]
\captionsetup[sub]{}
\centering
    \begin{subfigure}{0.49\textwidth}
    \centering
    \includegraphics[width=\textwidth]{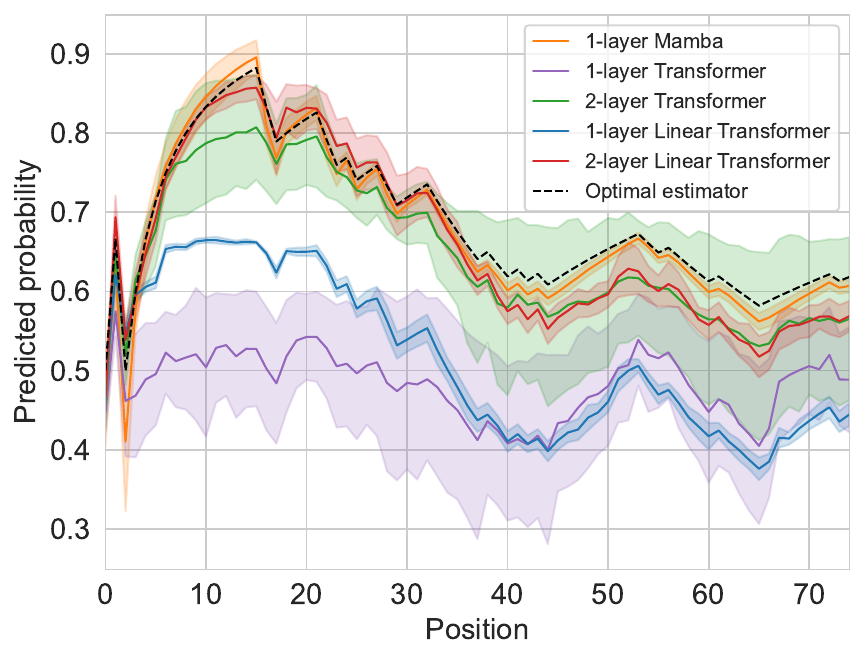} 
    \caption{Predicted probability}
    \end{subfigure}
\hfill
    \begin{subfigure}{0.49\textwidth}
    \centering
    \includegraphics[width=\textwidth]{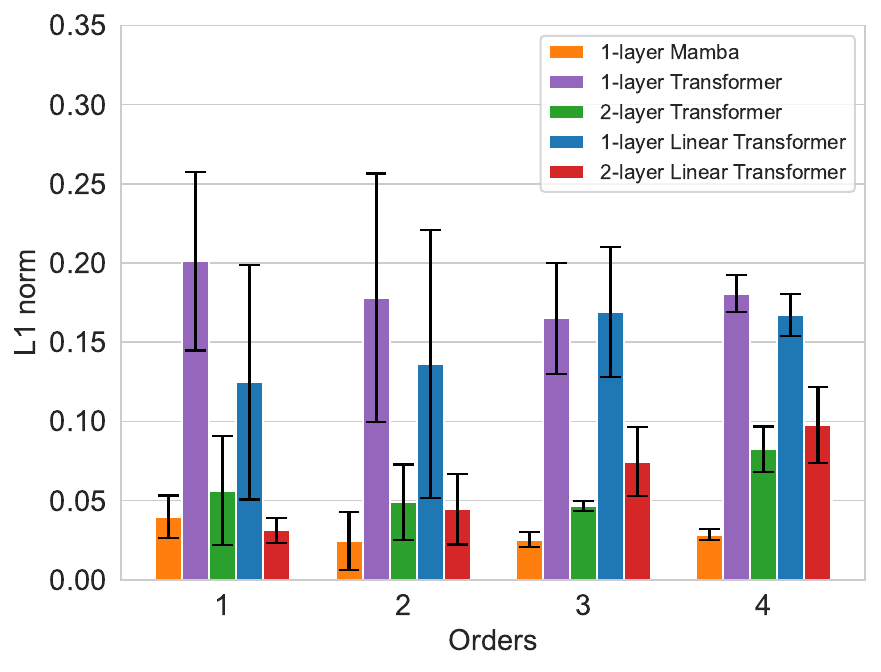} 
    \caption{Test loss}
    \end{subfigure}
\caption{Comparison of predicted probability and test loss between a 1-layer Mamba and the other baselines, including linear attention. Mamba outperforms all baselines. Linear attention and softmax attention Transformers perform similarly.}
\label{fig:lin-att}
\end{figure*}

\subsection{Multiple states}
\label{sec:multi-state}

\prettyref{fig:multi-state} shows that our results extend to larger number of states.

\begin{figure*}[h!]
\captionsetup[sub]{}
\centering
    \begin{subfigure}{0.49\textwidth}
    \centering
    \includegraphics[width=\textwidth]{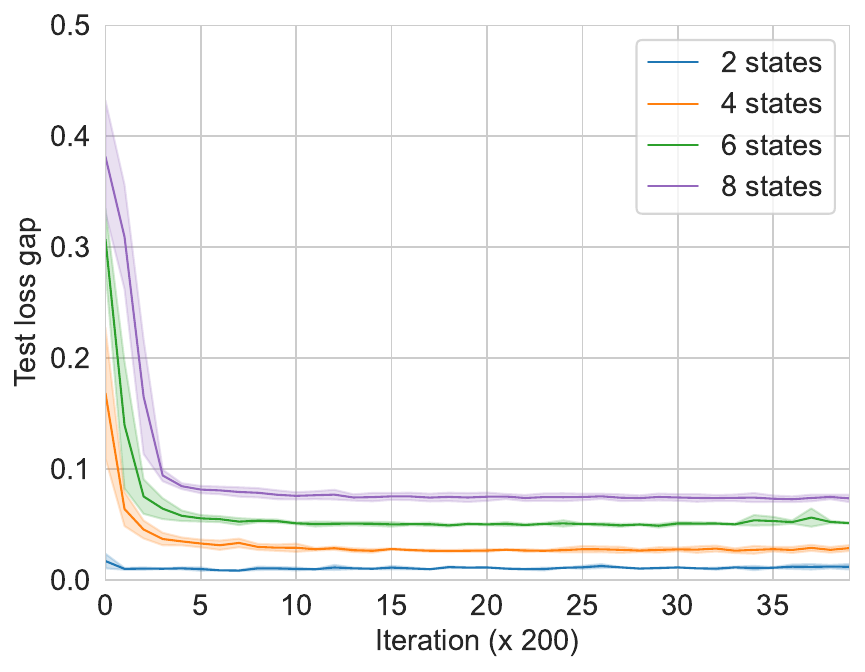} 
    \caption{Absolute test loss gap}
    \end{subfigure}
\hfill
    \begin{subfigure}{0.49\textwidth}
    \centering
    \includegraphics[width=\textwidth]{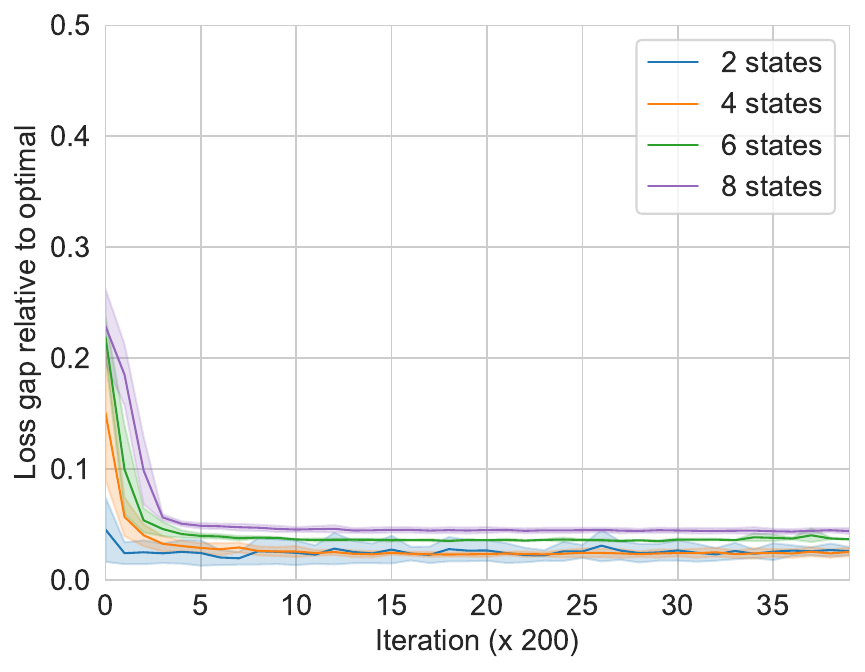} 
    \caption{Relative test loss gap}
    \end{subfigure}
\caption{Test loss gap from the optimal for 1-layer Mamba and first-order Markov data, for different number of states. (a) shows the absolute gap $L(\theta)-L^*$; (b) shows the relative gap $(L(\theta)-L^*)/ L^*$. The loss gap is consistently small for all state sizes.}
\label{fig:multi-state}
\end{figure*}

\subsection{Deeper networks}
\label{sec:deeper-networks}
\prettyref{fig:deeper-networks} show that our results are consistent with larger number of layers.

\subsection{Over-parametrized settings}
\label{sec:over-param}
\prettyref{fig:over-param} shows that our observations hold also for larger convolution width and deeper networks.

\begin{figure*}[h!]
\captionsetup[sub]{}
\centering
    \begin{subfigure}{0.49\textwidth}
    \centering
    \includegraphics[width=\textwidth]{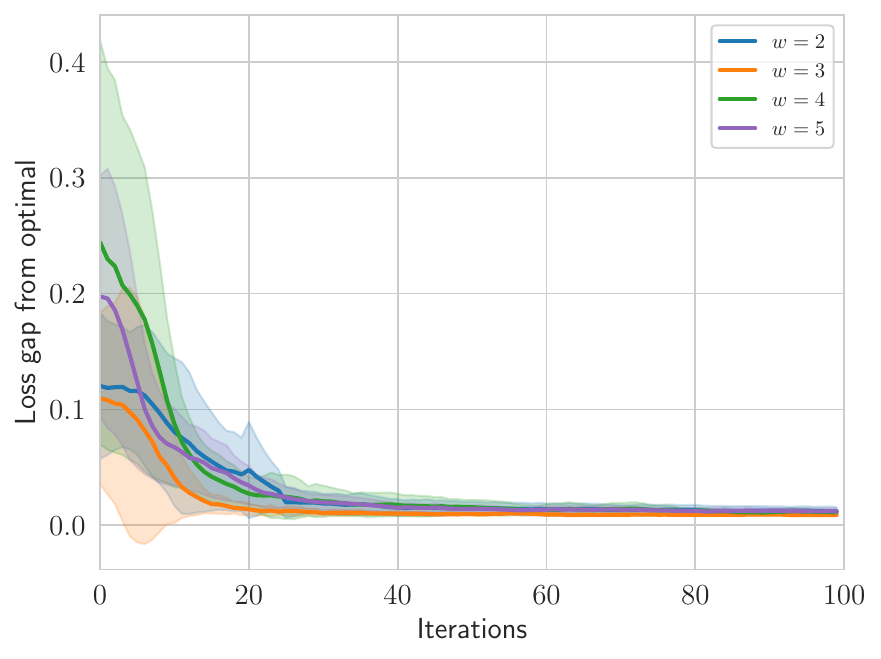} 
    \caption{Convolution width}
    \end{subfigure}
\hfill
    \begin{subfigure}{0.49\textwidth}
    \centering
    \includegraphics[width=\textwidth]{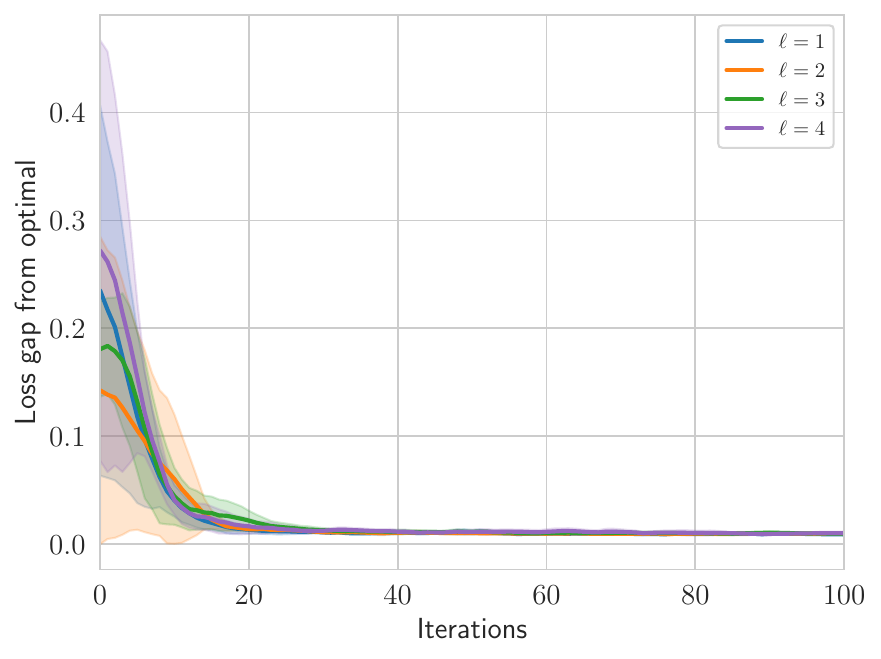} 
    \caption{Depth}
    \end{subfigure}
\caption{Test loss gap from optimal across iterations in over-parametrized settings. (a) shows 1-layer Mamba with varying convolution width. (b) shows a Mamba with convolution width $w=2$ and varying number of layers. The models correctly learn the optimal predictor in all cases.}
\label{fig:over-param}
\end{figure*}

\begin{figure*}[h!]
\captionsetup[sub]{}
\centering
    \begin{subfigure}{0.49\textwidth}
    \centering
    \includegraphics[width=\textwidth]{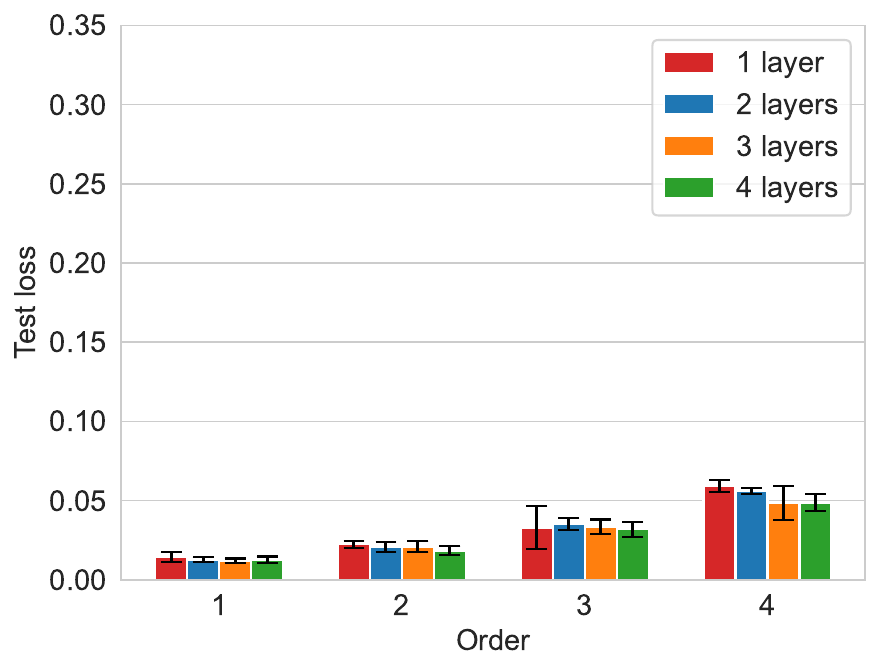} 
    \caption{Mamba}
    \label{fig:deeper-mamba}
    \end{subfigure}
\hfill
    \begin{subfigure}{0.49\textwidth}
    \centering
    \includegraphics[width=\textwidth]{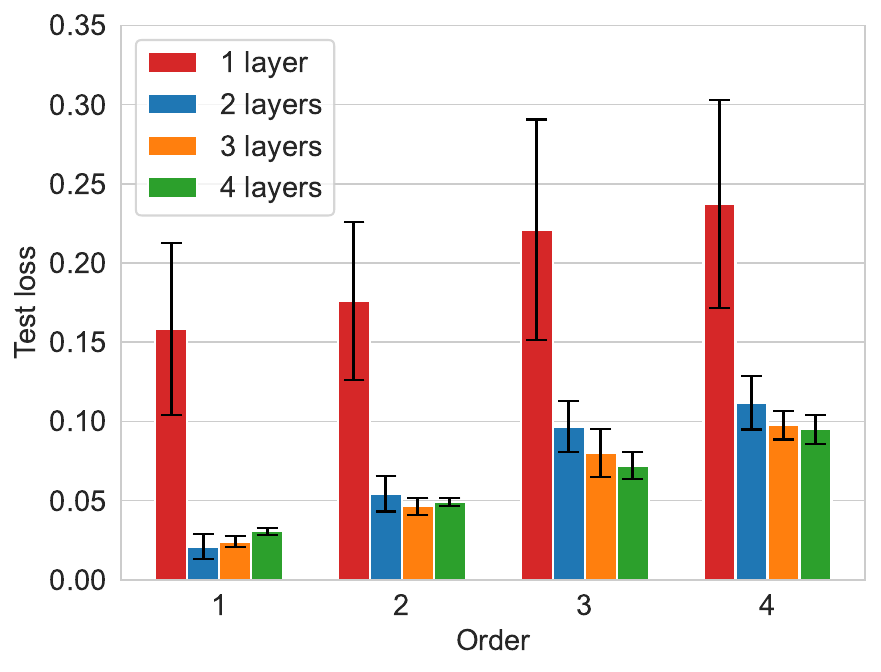} 
    \caption{Transformers}
    \end{subfigure}
\caption{Test loss gap from the optimal for Mamba and Transformers, for different number of layers and Markov orders. Mamba has a smaller loss gap than transformers across all orders. Furthermore, adding more layers to Mamba does not significantly improve performance. As expected, 1-layer Transformers cannot solve the Markov task, while 1-layer Mamba can.}
\label{fig:deeper-networks}
\end{figure*}

\subsection{Hold-out experiment}
\label{sec:hold-out}
We also consider the following interesting experiment: we trained Mamba only on sequences from a subset of the Markov simplex, specifically the interval $[0,0.5]$, and we tested it on sequences from its complement $[0.5,1]$. Interestingly, our experiments show that the test loss still converges to the optimal. However, by inspecting the actual estimated probabilities on a fixed test sequence, we see that the absolute difference between the model’s estimation and the optimal Laplacian smoothing is high for the first samples, and gradually decreases as the sequence progresses. This is justified by the fact that the Laplacian estimator is not only Bayes-optimal, but also minimax optimal (i.e., independently of the prior on the simplex) in the limit of long sequences (i.e., the gap from the optimal loss goes to $0$ as $n\to\infty$). \prettyref{tab:hold-out} shows the absolute difference $\lvert\mathbb{P}_{\theta}(x_t=1|x_1^{t-1})-\mathbb{P}^*(x_t=1|x_1^{t-1})\rvert$ for several $t$, showing how this difference gradually goes to zero as $t$ increases.

\begin{table}[h!]
\centering
\caption{Results for the hold-out experiment.}
\label{tab:hold-out}
\begin{tabular}{cc}
\toprule
\bf Length $t$ & \bf Abs. difference from optimal  \\
\midrule
$1$ & $0.378 \pm 0.050$ \\
$50$ & $0.098 \pm 0.039$ \\
$100$ & $0.007 \pm 0.003$ \\
$300$ & $0.001 \pm 0.001$ \\
\bottomrule
\end{tabular}
\end{table} 

\subsection{Convolution}
\label{sec:convolution}
\prettyref{fig:conv-transformers} shows that adding convolution to transformers (similarly to Mamba) makes the model solve the task with just one layer. On the contrary, \prettyref{fig:conv-mamba} shows that Mamba needs two layers to solve the task if convolution is removed. \prettyref{tab:wide-mamba} shows that a one-layer Mamba without convolution cannot learn the optimal estimator, no matter how wide the model is.

\begin{figure*}[h!]
\captionsetup[sub]{}
\centering
    \begin{subfigure}{0.49\textwidth}
    \centering
    \includegraphics[width=\textwidth]{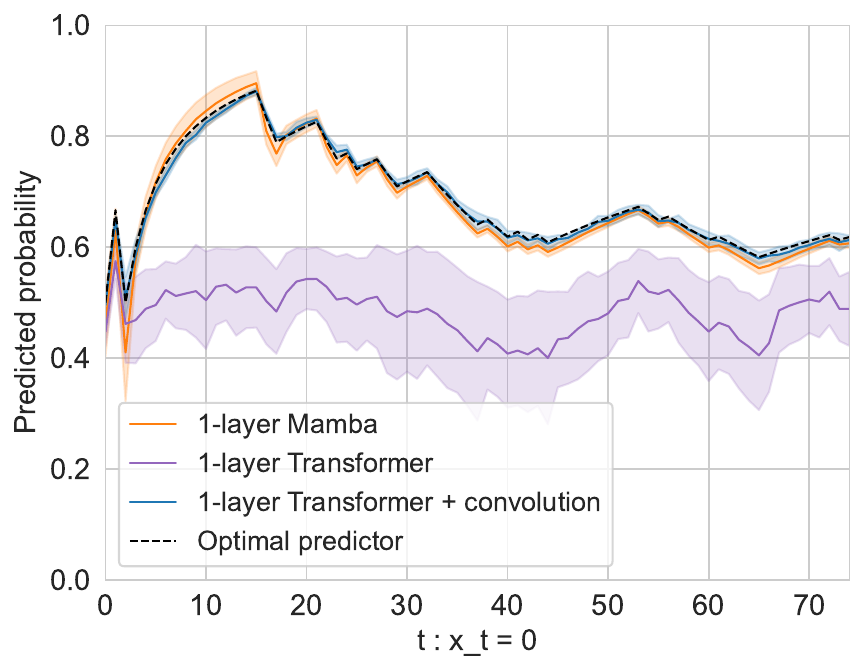} 
    \caption{Predicted probability}
    \end{subfigure}
\hfill
    \begin{subfigure}{0.49\textwidth}
    \centering
    \includegraphics[width=\textwidth]{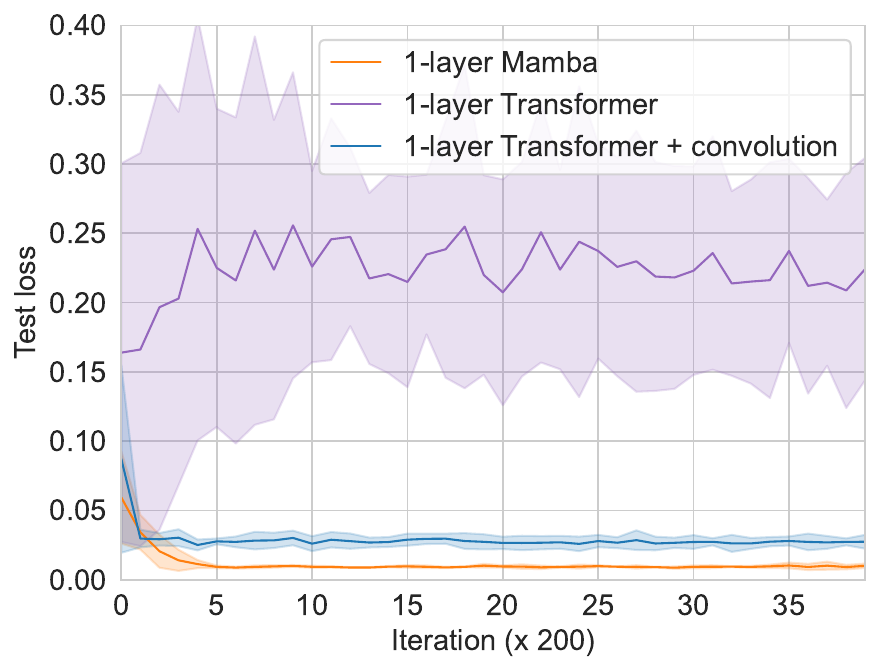} 
    \caption{Test loss}
    \end{subfigure}
\caption{Predicted probability and test loss for Transformers with and without convolution. Adding convolution to the $K,Q,V$ matrices of transformers makes the models succeed in learning the Markov task, similarly to 1-layer Mamba.}
\label{fig:conv-transformers}
\end{figure*}

\begin{figure*}[h!]
\captionsetup[sub]{}
\centering
    \begin{subfigure}{0.49\textwidth}
    \centering
    \includegraphics[width=\textwidth]{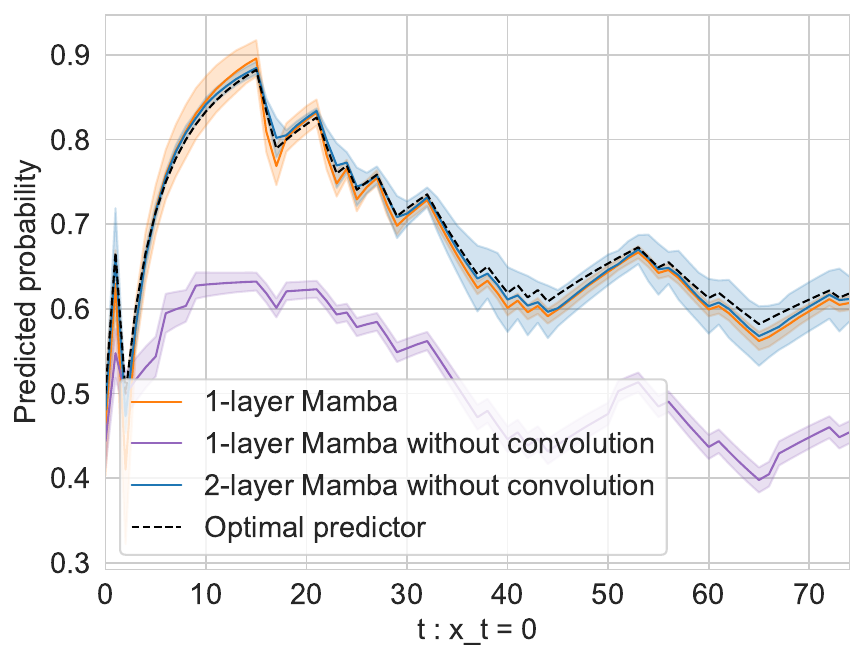} 
    \caption{Predicted probability}
    \end{subfigure}
\hfill
    \begin{subfigure}{0.49\textwidth}
    \centering
    \includegraphics[width=\textwidth]{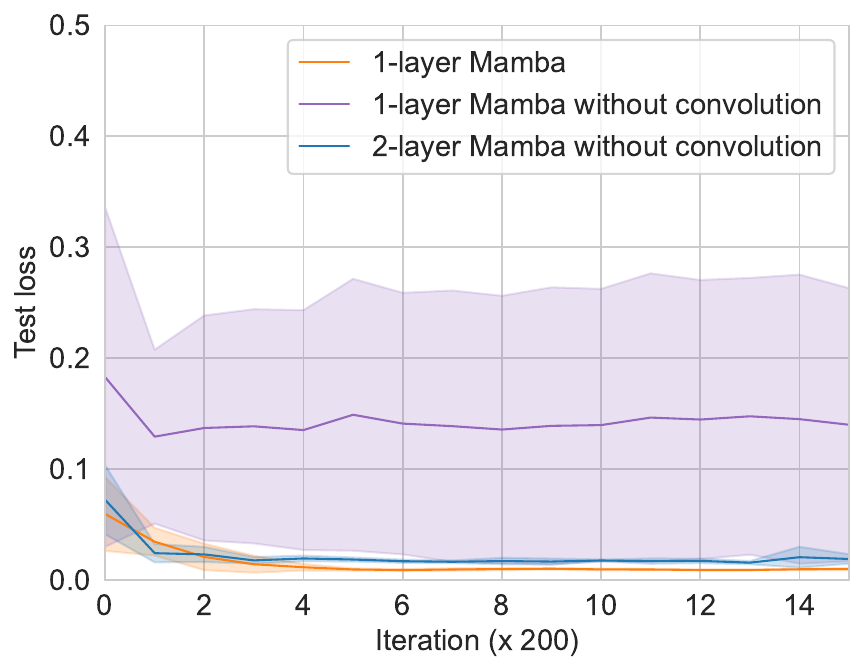} 
    \caption{Test loss}
    \end{subfigure}
\caption{Predicted probability and test loss for the full 1-layer Mamba and a 2-layer Mamba without convolution. Similarly to transformers, Mamba needs two layers to solve the Markov task when convolution is removed.}
\label{fig:conv-mamba}
\end{figure*}

\begin{table}[h!]
\centering
\caption{Experiments on one-layer Mamba without convolution, with varying width. The model does not learn the optimal estimator successfully.}
\label{tab:wide-mamba}
\begin{tabular}{cc}
\toprule
\bf Hidden dimension $d$ & \bf Avg. test loss gap from optimal  \\
\midrule
$10$ & $0.113 \pm 0.032$ \\
$100$ & $0.158 \pm 0.055$\\ 
$1000$ & $0.140 \pm 0.072$ \\ 
\bottomrule
\end{tabular}
\end{table} 

\subsection{Switching Markov}
\label{sec:switch_markov_app}

Here we detail the switching Markov process more formally:
\begin{enumerate}
    \item Initialize $t=0$.
    \vspace{-0.5em}
    \item Draw a binary Markov kernel $P$ with each row sampled \iid from $\dir{\beta}$.
    \vspace{-0.5em}
    \item Let $x_t = S$ with probability $p_{\rm switch}$, or sample $x_{t} \sim P_{x_{t-k+1}^t, :}$ with probability $1 - p_{\rm switch}$.
    \vspace{-0.5em}
    \item If $x_t = S$, set $t = t+1$ and go to step 2; if $x_t \neq S$, set $t = t+1$ and go to step 3.
\end{enumerate}

\prettyref{fig:mamba-switch} illustrates the behavior of Mamba on this process when $p_{\rm switch}=0.01$.

\begin{figure*}[h!]
\captionsetup[sub]{}
\centering
\begin{subfigure}{0.49\textwidth}
\centering
\includegraphics[width=\textwidth]{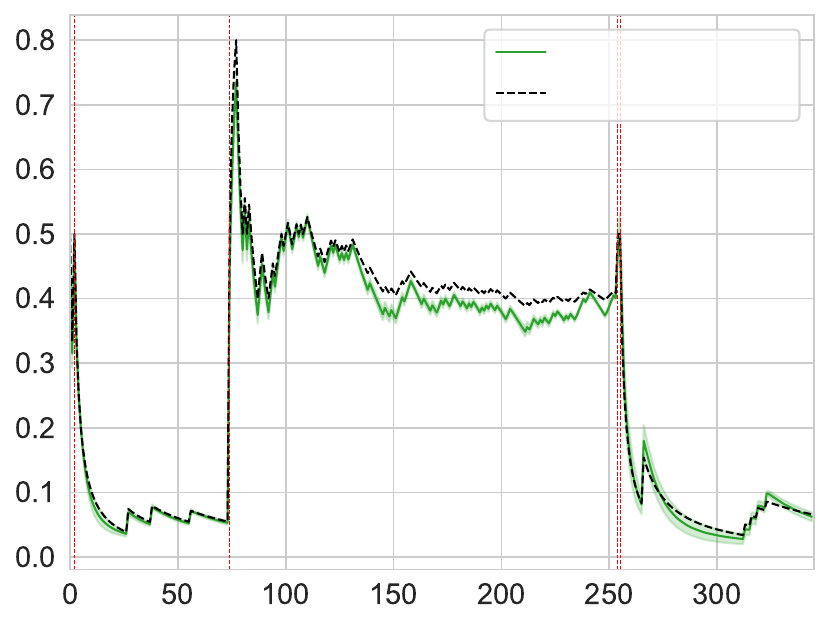} 
    \put(-110,-7){\fontsize{9}{3}\selectfont $t: x_t = 0$}
      \put(-205,74){\rotatebox[origin=t]{90}{\fontsize{7}{3}\selectfont Predicted probability $\mathbb{P}_{\btheta}\pth{{x_{t+1}=1 \mid x_1^t}}$}}
      \put(-63,132.5){\fontsize{7}{3}\selectfont $1$-layer Mamba}
      \put(-63,123){\fontsize{7}{3}\selectfont Optimal estimator}
\caption{Predicted probabilities}
\label{fig:switch-estimator}
\end{subfigure}
\hfill
\begin{subfigure}{0.49\textwidth}
\centering
\includegraphics[width=\textwidth]{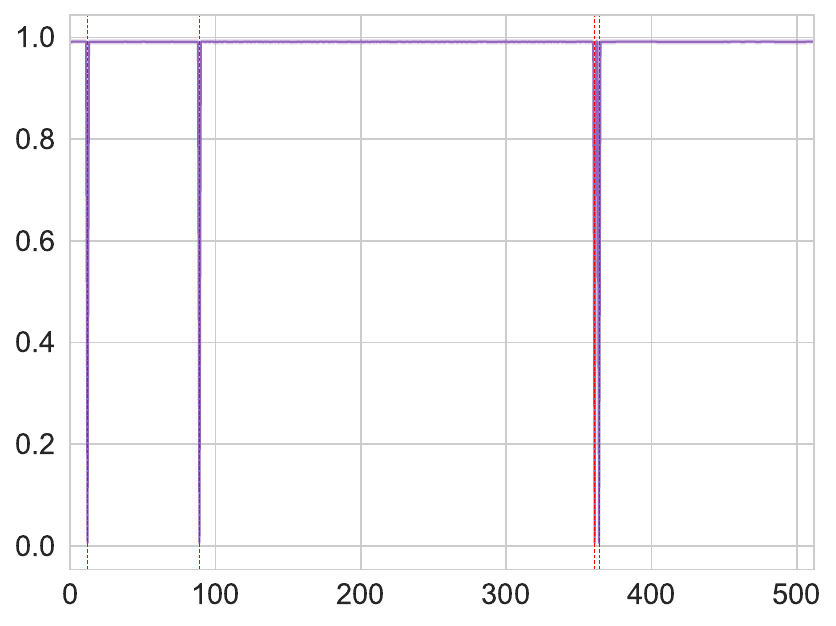} 
    \put(-110,-7){\fontsize{9}{3}\selectfont Position $t$}
      \put(-202,75){\rotatebox[origin=t]{90}{\fontsize{10}{3}\selectfont $a_t$}}
\caption{Value of $a_t$ across positions}
\label{fig:switch-at}
\end{subfigure}
\caption{One-layer Mamba on switching Markov data. Mamba is able to learn the optimal predictor by forgetting past counts every time a switch token occurs. This is achieved by setting $a_t = 0$ at every switch, and $a_t = 1$ otherwise.}
\label{fig:mamba-switch}
\end{figure*}

\subsection{Further ablation for natural language}
\label{sec:ablation}
The fundamental role played by convolution in modeling natural language is demonstrated in \prettyref{tab:lm_result}. However, we find that other components, in particular gating factor $a_t$, play an essential role as well, when natural language is considered. In \prettyref{tab:ablation}, we show the increase in test perplexity when convolution, gating factor $a_t$ and non-linearities are individually removed from the full Mamba-2 architecture.

\begin{table}[h!]
\caption{Perplexity results on the WikiText-$103$ dataset.}
\label{tab:gating}
\centering
\label{tab:ablation}
\begin{tabular}{llll}
\toprule
\bf Model & \bf Params. &\bf Perplexity &\bf Percentage increase \\
\midrule
Mamba-$2$ (full)& $14.54$ M &  $\bf{27.55}$ & -- \\
Mamba-$2$ (w/o conv)& $ 14.53$ M & $ 30.68 $ & $11\%$ \\ 
Mamba-$2$ (w/o gating factor)& $ 14.54$ M & $ 32.16 $ & $17\%$ \\ 
Mamba-$2$ (w/o non-linearities)& $ 14.54$ M & $ 28.98 $ & $5\%$ \\
\bottomrule
\end{tabular}
\end{table} 

\begin{table}[h!]
\caption{Perplexity results on WikiText-$103$ and PG-19 datasets for Mamba with 12 layers.}
\label{tab:12layers}
\centering
\begin{tabular}{llll}
\toprule
\bf Dataset & \bf Model & \bf Params. &\bf Perplexity \\
\midrule
WikiText-103& Mamba-$2$ & $110$ M & $21.38$ \\
WikiText-103& Mamba-$2$ w/o convolution& $110$ M &  $21.46$ \\
WikiText-103& Mamba-$2$ w/o gating& $110$ M &  $21.71$ \\
\midrule
PG-19& Mamba-$2$ & $200$ M & $14.16$ \\
PG-19& Mamba-$2$ w/o convolution& $200$ M &  $14.28$ \\
PG-19& Mamba-$2$ w/o gating& $200$ M &  $14.66$ \\
\bottomrule
\end{tabular}
\end{table}

\clearpage
\section{Model architectures and hyper-parameters}
\label{app:architecture}
The following tables discuss details on the architectures and hyperparameters used in all the paper's experiments. Each experiment was run on a single Nvidia A100 GPU. The time taken by each experiment was between 10 to 60 minutes.

\begin{table}[h!]
\caption{Parameters in the Mamba architecture with their shape.}
\label{tab:mamba-parameters}
\vspace{1mm}
\small%
\newcolumntype{R}{>{\raggedleft\arraybackslash}X}
\begin{tabularx}{\linewidth}{Xllr}
\toprule
Parameter
& Matrix shape \\
\cmidrule(lr){1-2}
embedding
& $2 \times d$ \\
mamba.A
& $1$ \\
mamba.dt
& $1$ \\
mamba.in\_proj
& $(2ed + 2N + 1) \times d$ \\
mamba.conv1d
& $(ed + 2N) \times w$ \\
mamba.out\_proj
& $d \times (2ed + 2N + 1)$ \\
mlp.fc1 
& $4d \times d$ \\
mlp.fc2
& $d \times 4d$ \\
lm\_head
& $d \times 2$ \\
\bottomrule
\end{tabularx}
\end{table}

\begin{table}[h!]
\caption{Settings and parameters for the Mamba model used in the experiments.}
\label{tab:mamba-setup}
\vspace{1mm}
\small%
\newcolumntype{R}{>{\raggedleft\arraybackslash}X}
\begin{tabularx}{\linewidth}{lX}
    \toprule
    Dataset & $k$-th order binary Markov source \\
    Architecture & Based on the Mamba-2 architecture as implemented in \cite{dao2024transformers} \\
    \midrule
    Batch size & Grid-searched in $\{16, 32, 64, 128, 256\}$ \\
    Accumulation steps & 1 \\
    \midrule
    Optimizer & AdamW ($\beta_1 = 0.9, \beta_2 = 0.95$) \\
    Learning rate & $0.001$ \\
    Scheduler & Cosine \\
    \# Iterations & $10000$ \\
    Weight decay & $\num{1e-3}$\\
    \midrule
    Dropout & $0$ \\
    Sequence length & Grid-searched in $\{128, 256, 512\}$ \\
    Embedding dimension & Grid-searched in $\{2,4,8,16,32\}$ \\
    Mamba layers & $1$ \\
    Heads & $1$ \\
    Convolution window & Between $2$ and $6$ \\
    \midrule
    Repetitions & $5$\\
    \bottomrule
\end{tabularx}
\end{table}

\begin{table}[h!]
\caption{Parameters in the transformer architecture with their shape.}
\label{tab:transformer-parameters}
\vspace{1mm}
\small%
\newcolumntype{R}{>{\raggedleft\arraybackslash}X}
\begin{tabularx}{\linewidth}{Xllr}
\toprule
Parameter
& Matrix shape \\
\cmidrule(lr){1-2}
transformer.wte 
& $2 \times d$ \\
transformer.wpe 
& $N \times d$ \\
transformer.h.ln\_1 $(\times \ell)$
& $d \times 1$ \\
transformer.h.attn.c\_attn $(\times \ell)$
& $3d \times d$ \\
transformer.h.attn.c\_proj $(\times \ell)$
& $d \times d$ \\
transformer.h.ln\_2 $(\times \ell)$
& $d \times 1$ \\
transformer.h.mlp.c\_fc $(\times \ell)$
& $4d \times d$ \\
transformer.h.mlp.c\_proj $(\times \ell)$
& $d \times 4d$ \\
transformer.ln\_f
& $d \times 1$ \\
\bottomrule
\end{tabularx}
\end{table}

\begin{table}[h!]
\caption{Settings and parameters for the transformer model used in the experiments.}
\label{tab:transformer-setup}
\vspace{1mm}
\small%
\newcolumntype{R}{>{\raggedleft\arraybackslash}X}
\begin{tabularx}{\linewidth}{lX}
    \toprule
    Dataset & $k$-th order binary Markov source \\
    Architecture & Based on the \gptwo architecture as implemented in \cite{pagliardini-llm} \\
    \midrule
    Batch size & Grid-searched in $\{16, 32, 64, 128, 256\}$ \\
    Accumulation steps & 1 \\
    \midrule
    Optimizer & AdamW ($\beta_1 = 0.9, \beta_2 = 0.95$) \\
    Learning rate & $0.001$ \\
    Scheduler & Cosine \\
    \# Iterations & $10000$ \\
    Weight decay & $\num{1e-3}$\\
    \midrule
    Dropout & $0$ \\
    Sequence length & Grid-searched in $\{128, 256, 512, 1024\}$ \\
    Embedding dimension & Grid-searched in $\{4,8,16,32\}$ \\
    Transformer layers & Between $1$ and $2$ depending on the experiment \\
    Attention heads & $1$ \\
    \midrule
    Repetitions & $5$\\
    \bottomrule
\end{tabularx}
\end{table}

\end{document}